\definecolor{mydarkblue}{rgb}{0,0.08,0.45}
\definecolor{mydarkred}{rgb}{0.54,0,0}
\DeclareMathOperator*{\argmin}{arg\,min}
\newcommand{\HOAG}{\textsc{Hoag}}
\def\RR{{\mathbb R}}
\def\DD{{\mathcal D}}
\newmdtheoremenv{alg}{Algorithm}
\newmdtheoremenv{theo}{Theorem}
\newtheorem{theorem}{Theorem}
\newtheorem{lemma}[theorem]{Lemma}
\newenvironment{customthm}[1]
  {\innercustomthm}
  {\endinnercustomthm}
\icmltitlerunning{Hyperparameter optimization with approximate gradient}
\begin{document}

\twocolumn[
\icmltitle{Hyperparameter optimization with approximate gradient}
\icmlauthor{Fabian Pedregosa}{f@bianp.net}
\icmladdress{Chaire Havas-Dauphine  ``\'Economie des Nouvelles Donn\'ees'' \\ 
CEREMADE, CNRS UMR 7534, Universit\'e Paris-Dauphine, PSL Research University \\
D\'epartement Informatique de l'\'Ecole Normale Sup\'erieure, Paris}
\icmlkeywords{hyperparameter optimization, machine learning, ICML}
\vskip 0.3in
]

\begin{abstract}
Most models in machine learning contain at least one hyperparameter to control for model complexity. Choosing an appropriate set of hyperparameters is both crucial in terms of model accuracy and computationally challenging. In this work we propose an algorithm for the optimization of continuous hyperparameters using inexact gradient information. An advantage of this method is that hyperparameters can be updated before model parameters have fully converged. We also give sufficient conditions for the global convergence of this method, based on regularity conditions of the involved functions and summability of errors.
Finally, we validate the empirical performance of this method on the estimation of regularization constants of \mbox{$\ell_2$-regularized} logistic regression and kernel Ridge regression. Empirical benchmarks indicate that our approach is highly competitive with respect to state of the art methods.
\end{abstract}

\section{Introduction}

Most models in machine learning feature at least one hyperparameter to control for model complexity. Regularized models, for example, control the trade-off between a data fidelity term and a regularization term through one or several hyperparameters. Among its most well-known instances are the LASSO~\citep{tibshirani1996regression}, in which $\ell_1$ regularization is added to a squared loss to encourage sparsity in the solutions, or $\ell_2$-regularized logistic regression, in which squared $\ell_2$ regularization (known as \emph{weight decay} in the context of neural networks) is added to obtain solutions with small euclidean norm. Another class of hyperparameters are the kernel parameters in support vector machines. For example, the popular radial basis function (RBF) kernel depends on a ``width'' parameter, while polynomial kernels depend on a discrete hyperparameter specifying the degree. Hyperparameters can be broadly categorized into two groups: continuous hyperparameters, such as regularization parameters or the width of an RBF kernel and discrete hyperparameters, such as the degree of a polynomial. In this work we focus on continuous hyperparameters. 
% All the aforementioned examples are cases of continuous hyperparameters.

 % Yet, despite its practical importance, hyperparameter optimization is commonly performed manually, via rules-of-thumb

\begin{figure}
\center \includegraphics[width=0.8 \linewidth]{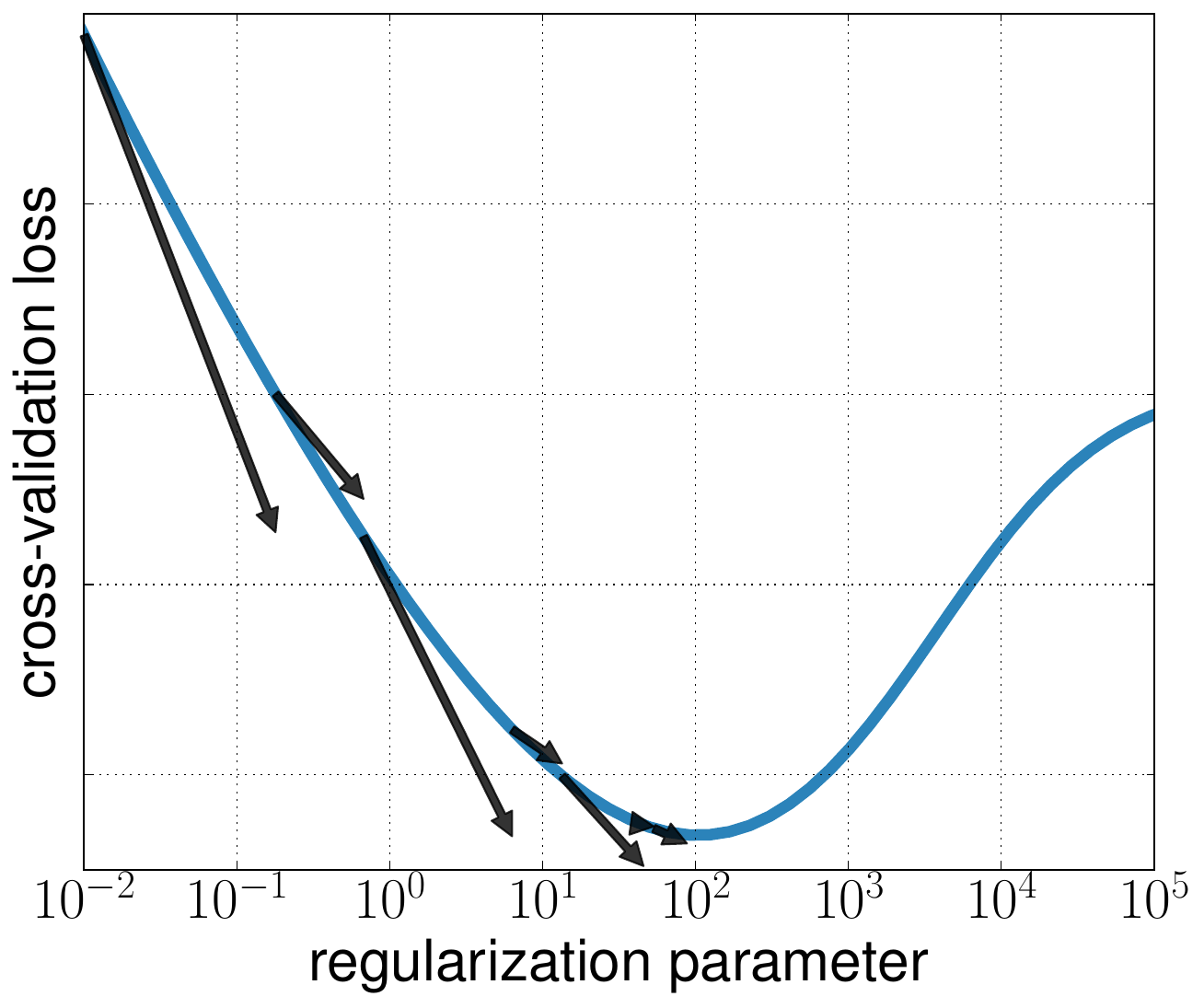}
\caption{Hyperparameter Optimization with approximate gradient. The gradient of the cross-validation loss function with respect to hyperparameters is computed approximately. This noisy gradient is then used to estimate the optimal hyperparameters by gradient descent. A decreasing bound between the true gradient and the approximate gradient ensures that the method converges towards a stationary point.}
\end{figure}

The problem of identifying the optimal set of hyperparameters is known as \emph{hyperparameter optimization}. Hyperparameters cannot be estimated to minimize the same cost function as model parameters, since this would favor models with excessive complexity. For example, if regularization parameters were chosen to minimize the same loss as model parameters, then models with no regularization would always yield the smallest loss. For this reason, hyperparameter optimization algorithms seek to optimize a criterion of model quality which is different from the cost function used to fit model parameters. This criterion can be a goodness of fit on unseen data, such as a \emph{cross-validation} loss, or some criteria of model quality on the train set such as SURE~\citep{stein1981estimation}, AIC/BIC~\citep{liu2011parametric} or Mallows $C_p$~\citep{mallows1973some}, to name a few.

Choosing the appropriate set of hyperparameters has often a dramatic influence in model accuracy and many hyperparameter optimization algorithms have been proposed in the literature. For example, in the widely used \emph{grid-search} algorithm, the model is trained over a range of values for the hyperparameters and the value that gives the best performance on the cross-validation loss is chosen. This not only scales poorly with the number of hyperparameters, but also involves fitting the full model for values of hyperparameters that are very unpromising. Random search~\citep{bergstra2011algorithms} has been proven to yield a faster exploration of the hyperparameter space than grid search, specially in spaces with multiple hyperparameters. However, none of these methods make use of previous evaluations to make an informed decision of the next iterate. As such, convergence to a global minima can be very slow.
% {\blue its convergence is slow and it still scales poorly. An advantage of both methods is that because they do not make informed decisions, they are trivial to parallelize}. 

In recent years, sequential model-based optimization (SMBO) techniques have emerged as a powerful tool for hyperparameter optimization (see e.g.~\citep{brochu2010tutorial} for an review on current methodologies). These techniques proceed by fitting a probabilistic model to the data and then using this model as an inexpensive proxy in order to determine the most promising location to evaluate next. This probabilistic model typically relies on a Gaussian process regressor but other approaches exist using trees~\citep{bergstra2011algorithms} or ensemble methods~\citep{lacoste2014sequential}. The model is built using only function evaluations, and for this reason SMBO is often considered as a \emph{black-box} optimization method. 

% This makes them easy to implement and widely applicable, however, as we will see, further computational efficiency can be derived from exploiting the structure within the cost function.

A third family of methods, which includes the method that we present, estimate the optimal hyperparameters using smooth optimization techniques such as gradient descent. 
We will refer to these methods as \emph{gradient-based} hyperparameter optimization methods. These methods use local information about the cost function in order to compute the gradient of the cost function with respect to hyperparameters. However, computing the gradient with respect to hyperparameters has reveled to be a major bottleneck in this approach. For this reason we propose an algorithm that replaces the gradient with an approximation. More precisely, we make the following contributions:
% {\bf propose an algorithm with convergence guarantees that relaxes the requirement of exact model estimation} and allows for a non-zero tolerance in the model estimation and in the matrix inversion step. This tolerance is controlled through a user-defined sequence. This way, %the model estimation for values of hyperparameters 
% for hyperparameters that are further away from the optimal hyperparameters, model parameters can be estimated with lesser accuracy than for hyperparameters that are closer to the optimal ones. As we will see in Section~\ref{scs:experiments}, this translates into a faster convergence rate that competing methods, even for the case of just 1 hyperparameter.
% Contributions:
\begin{itemize}
\item We propose a gradient-based hyperparameter optimization algorithm that uses approximate gradient information rather than the true gradient. 
% We discuss implementation details such as the choice of step size in Section~\ref{scs:implementation}.
\item We provide sufficient conditions for the convergence of this method to a stationary point.
\item We compare this approach against state-of-the art methods for the task of estimation of regularization and kernel parameter on two different models and three datasets.
\end{itemize}

{\bf Notation} We denote the gradient of a real-valued function by $\nabla$. If this function has several input arguments, we denote $\nabla_i$ its gradient with respect to the $i$-th argument. Similarly, $\nabla^2$ denotes the Hessian and $\nabla^2_{i, j}$ denotes the second order differential with respect to variables $i$ and $j$. For functions that are not real-valued, we denote its differential by $\Dif$. We denote the projection operator onto a set $\DD$ by $P_\DD$. That is,
$
P_\DD(\alpha) \triangleq \argmin_{\lambda \in \DD}{\|\alpha - \lambda\|^2}
$, where $\|\cdot\|$ denotes the euclidean norm for vectors. 

Throughout the paper we take the convention of denoting real-valued functions with lowercase letters (such as $f$ and $g$) and vector-valued functions with uppercase letters (such as $X$). Model parameters are denoted using lowercase Latin letters (such as $x$) while hyperparameters are denoted using Greek lowercase letters (such as $\lambda$).

\subsection{Problem setting}

As mentioned in the introduction, the goal of hyperparameter optimization is to choose the hyperparameters $\lambda$ that optimizes some criteria, such as a cross-validation loss or a SURE/AIC/BIC criteria. We will denote this criteria by $f: \RR^s \to \RR$, where $s$ is the number of hyperparameters. In its simplest form, the hyperparameter optimization problem can be seen as the problem of minimizing the cost function $f$ over a domain $\mathcal{D} \subseteq \RR^s$. Some approaches, such as sequential model-based optimization, only require function evaluations of this cost function. The methods we are interested in however use local information of the objective function.

% However, since our approach is based on fathering local information from this cost function

% {\blue However, the this problem has a richer structure that can be used in the optimization procedure}.

% {\blue We need to go deeper in order to extract the local information}. 
The cost function $f$ (e.g. the cross-validation error) depends on the model parameters, which we will denote by $X(\lambda)$. These are commonly not available in closed form but rather defined implicitly as the minimizers of some cost function that we will denote ${h(\cdot, \lambda): \RR^p \to \RR}$, where $p$ is the number of model parameters. This makes the hyperparmater optimization problem can be naturally expressed as a nested or \emph{bi-level optimization} problem:
\begin{equation}\label{eq:general_ho}\tag{{HO}}
\begin{split}
\argmin_{\lambda \in \DD} \left\{f(\lambda) \triangleq g(X(\lambda), \lambda) \right\} \\
\text{s.t. } X(\lambda) \in \argmin_{x \in \RR^p} h(x, \lambda) \quad,
\end{split}
\end{equation}
where the minimization over $h$ is commonly referred to as the \emph{inner optimization} problem. A notable example of hyperparameter optimization problem is that of regularization parameter selection by cross-validation. For simplicity, we restrict the discussion to the case of simple or hold-out cross-validation, where the dataset is split only once, although the methods presented here extend naturally to other cross-validation schemes. In this setting, the dataset is split in two: a train set (denoted $\mathcal{S}_{\text{train}}$) and a test or hold-out set (denoted $\mathcal{S}_{\text{test}})$. In this case, the outer cost function is a goodness of fit or loss on the test set, while the inner one is a trade-off between a data fitting term on the train set and a penalty term. If the penalty term is a squared \mbox{$\ell_2$-norm}, then the problem adopts the form:
\begin{equation}\label{eq:ml_ho_problem}
\begin{aligned}
\argmin_{\lambda \in \DD} & ~\text{loss}(\mathcal{S}_{\text{test}}, X(\lambda)) \\
\text{s.t. } X(\lambda) \in \argmin_{x \in \RR^p} & ~\text{loss}(\mathcal{S}_{\text{train}}, x) + e^{\lambda} \|x\|^2 \quad.
\end{aligned}
\end{equation}
The trade-off in the inner optimization between the goodness of fit term and the penalty term is controlled through the hyperparamter $\lambda$. Higher values of $\lambda$ bias the model parameters towards vectors with small euclidean norm, and the goal of the hyperparameter optimization problem is to find the right trade-off between these two terms. The parametrization of the regularization parameter by an exponential ($e^{\lambda}$) in Eq.~\eqref{eq:ml_ho_problem} might seem unusual, but given that this regularization parameter is commonly optimized over a log-spaced grid, we will find this parametrization useful in later sections.

% A related hyperparameter optimization problem arises when considering kernel methods such as support vector machines or kernel Ridge regression. In this setting, the kernel define a mapping from the input features to a high-dimensional, implicit feature space. It is common for the kernel function to depend on some hyperparameters, such as the kernel width in the radial basis function (RBF) kernel.

% besides a regularization term, the kernels (which maps input features to a high-dimensional space)

% Note that the data fitting terms in the inner and outer optimization need need not necessarily be the same, although this will be the case in the applications we consider. {\blue mention kernels?}. {\blue Mention the exponential}.

Turning back to the general problem~\eqref{eq:general_ho}, we will now describe an approach to compute the derivative of the cost function $f$ with respect to hyperparameters.
% In interest of space we omits the details of this approach, the interested reader is refered to~\citep{NIPS2007_3286} for the specifics. 
This approach, which we will refer to as implicit differentiation~\citep{larsen1996design,bengio2000gradient,NIPS2007_3286}, relies on the observation that under some regularity conditions it is possible to replace the inner optimization problem by an implicit equation. For example, if $h$ is smooth and verifies that all stationary points are global minima (as is the case for convex functions), then the values $X(\lambda)$ are characterized by the implicit equation $\nabla_1 h(X(\lambda), \lambda) = 0$. 
% This is the most straightforward characterization of $X$ by an implicit equation, but other options are equally possible. For example, if $h$ satisfies strong duality, this can also be the gradient of the dual formulation. In any case, we denote by $E: \RR \times \RR^p \to \RR^p$ a function such that $X(\lambda)$ is characterized by the implicit equation $E(\lambda, X(\lambda)) = 0$. 
% With this notation, we can write the hyperparameter optimization problem as:
% \begin{equation}\label{eq:ho}\tag{{HO}}
% \begin{split}
% \argmin_{\lambda \in \DD} \left\{f(\lambda) \triangleq g(X(\lambda), \lambda) \right\} \\
% \text{s.t. } \nabla_1 h(X(\lambda), \lambda) = 0 \quad.
% \end{split}
% \end{equation}
Deriving the implicit equation with respect to $\lambda$ leads to the equation $\nabla^2_{1, 2} h + \nabla^2_{1} h \cdot \Dif X = 0$, which, assuming $\nabla^2_{1} h$ invertible, characterizes the derivative of $X$. The chain rule, together with this equation, allows us to write the following formula for the gradient of $f$:
\begin{equation}\label{eq:grad_f_full}
\begin{aligned}
\nabla f &= \nabla_2 g +  (\Dif X)^T \nabla_1 g\\
&= \nabla_2 g - \left(\nabla^2_{1, 2} h\right)^T \left(\nabla^2_{1} h\right)^{-1} \nabla_1 g \quad.
\end{aligned}
\end{equation}
% Both formulas are equivalent, but the second one is preferred for computational reasons~\citep{NIPS2007_3286}. This is so because in practice the matrix ${\partial E}/{\partial X}$ is not explicitly inverted. Rather, a linear system of equations is computed involving ${\partial E}/{\partial X}$ as left-hand side and as right-hand side the vector with whom the inverse is multiplied, that is, $\partial E / \partial \lambda$ or $\partial g / \partial X$. Since the first option is a matrix in $\RR^{p \times s}$ while the second one is a vector in $\RR^p$, it is clear that the second equation is computationally advantageous, specially in the case in which there is more than one hyperparameter to optimize for.
This formula allows to compute the gradient of $f$ given the following quantities: model parameters $X(\lambda)$ ($g$ and $h$ are evaluated at $(X(\lambda), \lambda)$) and $\left(\nabla^2_{1} h\right)^{-1} \nabla_1 g$, which is usually computed as the solution to the linear system $\left(\nabla^2_{1} h\right)z = \nabla_1 g$ for $z$. In the section that follows, we present an algorithm that relaxes the condition of both knowledge of the exact model parameters and exact solution of the linear system.

\section{\HOAG: Hyperparameter optimization with approximate gradient}\label{scs:HOAG}

As we have seen in the previous section, computing an exact gradient of $f$ can be computationally demanding. In this section we present an algorithm that uses an approximation, rather than the true gradient, in order to estimate the optimal hyperparameters. This approach yields a trade-off between speed and accuracy: a loose approximation can be computed faster but might result in slow convergence or even divergence of the algorithm. At iteration $k$, this trade-off is balanced by the tolerance parameter $\varepsilon_k$. The sequence of tolerance parameters $\{\varepsilon_1, \varepsilon_2, \ldots \}$ will turn out to play a major role in the convergence of the algorithm, 
% by the tolerance sequence $\{\varepsilon_i\}_{i=1}^\infty$, where $\varepsilon_i$ regulates the precision at the $i$-th iteration. 
although the time being, we will treat it as free parameter.
% , although the theoretical analysis of Section~\ref{scs:analysis} will suggest which are appropriate values to guarantee convergence. 
We now describe our main contribution, the \HOAG\ algorithm:
\vbox{  % avoid page break
\begin{alg}[\HOAG]\label{alg:hoag}
At iteration $k=1, 2,\ldots$ perform the following:
\begin{enumerate}[(i)]
\item Solve the inner optimization problem up to tolerance $\varepsilon_k$. That is, find $x_k$ such that $$\norm{X(\lambda_k) - x_k} \leq \varepsilon_k\quad.$$
\item Solve the linear system $\nabla_{1}^2 h(x_k, \lambda_k) q_k = \nabla_1 g(x_k, \lambda_k)$ for $q_k$ up to tolerance $\varepsilon_k$. That is, find $q_k$ such that
$$
\norm{\nabla_{1}^2 h(x_k, \lambda_k) q_k - \nabla_1 g(x_k, \lambda_k)} \leq \varepsilon_k \quad.
$$
\item Compute approximate gradient $p_k$ as
$$
p_k = \nabla_2 g(x_k, \lambda_k) - \nabla^2_{1, 2} h(x_k, \lambda_k)^T q_k \quad,
$$
\item Update hyperparameters:
$$
\lambda_{k+1} = P_{\DD}\left(\lambda_k - \frac{1}{L} p_k\right) \quad.
$$
\end{enumerate}
\end{alg}
}

This algorithm consists of four steps. The first two steps of the algorithm compute approximations to the quantities used in Eq.~\eqref{eq:grad_f_full} to compute the gradient of $f$. However, since these are not computed to full accuracy, $p_k$, computed in step $(iii)$ is a noisy estimate of the gradient. This approximation is then used as a replacement of the true gradient in a projected gradient-descent ($iv$) iteration.

This procedure requires access to three quantities at iteration $k$: a $\varepsilon_k$-optimal solution to the inner optimization problem which can be computed with any solver, the first-order derivatives of $g$, ($\nabla_1 g, \nabla_2 g$), and an $\varepsilon_k$-optimal solution to a linear system involving $\nabla^2_{1} h$. In practice, this system is solved using a conjugate-gradient method, which only requires access to the matrix $\nabla^2_{1} h$ through matrix-vector products. For example, in machine learning problems such as the ones introduced in Eq.~\eqref{eq:ml_ho_problem}, the quantity $\nabla_1^2 h$ corresponds to the Hessian of the inner optimization problem. Efficient schemes for multiplication by the Hessian can be derived for least squares, logistic regression~\citep{lin2008trust} and other general loss functions~\citep{pearlmutter1994fast}.

\subsection{Related work}\label{scs:related_work}

% Some people that have done gradient-based hyperparameter optimization. Describe~\citep{bengio2000gradient,NIPS2007_3286,chapelle2002choosing,keerthi2006efficient}.

There exists a large variety of hyperparameter optimization methods, and a full review of this literature would be outside the scope of this work. Below, we comment on the relationship between \HOAG\ and some of the most closely related methods. 

Regarding gradient-based hyperparameter optimization methods we will distinguish two main approaches, implicit differentiation and iterative differentiation, depending on how the gradient with respect to hyperparameters is computed.

{\bf Implicit differentiation}. This approach consists in deriving an implicit equation for the gradient using the optimality conditions of the inner optimization problem (as we did in Eq.~\eqref{eq:grad_f_full}). Originally motivated by the problem of setting the regularization parameter in the context of neural networks~\citep{larsen1996design,larsen1998adaptive,bengio2000gradient}, has also been applied to the problem of selecting kernel parameters~\citep{chapelle2002choosing,seeger2008cross} or multiple regularization parameters in log-linear models~\citep{NIPS2007_3286}. This approach has also been successfully applied to the problem of image reconstruction~\citep{kunisch2013bilevel,calatroni2015bilevel}, in which case the simplicity of the cost function function allows for a particularly simple expression of the gradient with respect to hyperparameters.

{\bf Iterative differentiation}. In this approach, the gradient with respect to hyperparameters is computed by differentiating each step of the inner optimization algorithm and then using the chain rule to aggregate the results. Since the gradient is computed after a finite number of steps of the inner optimization routine, the estimated gradient is naturally an approximation to the true gradient. This method was first proposed by \citet{domke2012generic} and later extended to the setting of stochastic gradient descent by~\citet{MacDuvAda2015hyper}. We note also that contrary to the implicit differentiation approach, this method can be applied to problems with \mbox{non-smooth} cost functions~\citep{deledalle2014stein,ochs2015bilevel}.

\HOAG, while belonging to the class of implicit differentiation methods, is related to iterative differentiation methods in that it allows the gradient with respect to hyperparameters to be computed approximately. 

% One key difference is that in HOAG the estimate of the gradient is guaranteed to converge asymptotically to the true gradient (see Section~\ref{scs:analysis}), while previous literature on iterative differentiation has limited to state the procedure approximates the true gradient, without providing the specifics. 

% , in our method the tolerance between the true gradient and its approximation tends to zero, while to the best of our knowledge such schemes have not yet been developed for iterative differentiation. That is, in iterative differentiation methods, since the gradient is computed using a finite number of steps of the inner optimization, this is necessarily an approximation of the true gradient, and since the 

% Other heuristics such as have also been proposed as an approximation to the true gradient. \citet{luketina2015scalable} proposes to compute the gradient assumming
% This method does not guarantee convergence. {\blue{TODO: expand ...}}.

Finally, we note that similar approaches have also been considered in the setting of {\bf sequential model-based optimization}. \citet{swersky2014freeze} proposes an approach in which the inner optimization is ``freezed'' whenever the method decides that the current hyperparameter values are not promising. It does so by introducing a prior on training curves as a function of input hyperparameters. This approach however requires to make strong assumptions on the shape of the training curves which gradient-based methods do not make.

% In a different context of that of gradient-based hyperparameter optimization, SMBO-based approaches have also considered the setting in which the inner optimization has not converge fully. 

% {\blue The iterative methods have been proposed as a way to allow for errors in the computation. Our method shows that also implicit differentiation methods can allow for an error in the gradient, as long as it verifies the summability condition}.

% {\blue Finally, we note that such a setup is commonplace in Bayesian statistics, where (marginal) inference is typically employed as subroutine in parameter learning}

\section{Analysis}\label{scs:analysis}

In this section we will prove that the summability of the tolerance sequence $\{\varepsilon_i\}_{i=1}^{\infty}$ is sufficient to guarantee convergence of the iterates in \HOAG. The analysis of this algorithm is inspired by the work of~\citet{d2008smooth,schmidt2011convergence, friedlander2012hybrid} on inexact-gradient algorithms for convex optimization.

We will start this section by enumerating the regularity conditions that we assume for the hyperparameter optimization problem. The following conditions are assumed through the section:
\begin{itemize}
\item {(A1) $L$-smoothness}. We assume that the first derivatives of $g$ and the second derivatives of $h$ are Lipschitz continuous functions.
\item {(A2) Nonsingular Hessian}. We assume that the matrix $\nabla_1^2 h$, which corresponds to the Hessian of the inner optimization problem, is invertible at the values $(X(\lambda), \lambda), \lambda \in \mathcal{D}$.
\item {(A3) Convex compact domain}. The domain under which the hyperparameters are optimized, $\DD$, is a convex non-empty and compact subset of $\RR^s$.

\end{itemize}

These assumptions are verified by many models of interest. For example, for the problem of estimation of regularization parameters of Eq.~\eqref{eq:ml_ho_problem}, it allows twice-differentiable loss functions such as logistic regression or least squares (assumption A1) and strongly convex penalties (A2), such as squared $\ell_2$ regularization. Note that condition (A2) need not be verified on all its domain, only on the points $(X(\lambda), \lambda)$, which would allow in principle to consider models that are defined through a non-convex cost functions. Assumption (A3) requires that the domain of the hyperparameters is a convex compact domain. In practice, hyperparameters are optimized over a $s$-dimensional interval, i.e., a domain of the form ${\DD = [a_1, b_1]\times \cdots [a_s, b_s]}$. Our analysis however only require this domain to be convex and compact, a constraint that subsumes $s$-dimensional intervals.

The rest of the section is devoted to prove (under conditions) the convergence of \HOAG.
The proof is divided in two parts. First, we will prove that the difference between the true gradient and the approximate gradient is bounded by $\mathcal{O}(\varepsilon)$ (Theorem~\ref{thm:bound_gradient}) and in a second part we will prove that if the sequence $\{\varepsilon_i\}_{i=1}^\infty$ is summable, then this implies the convergence to a stationary point of $f$ (Theorem~\ref{thm:convergence}). Because of space limitation, the proofs are omitted and can be found in Appendix~\ref{scs:appendix_analysis}.

\vbox{
\begin{theorem}[The gradient error is bounded]\label{thm:bound_gradient}
For sufficiently large $k$, the error in the gradient is bounded by a constant factor of $\varepsilon_k$. That is,
$$
\norm{\nabla f(\lambda_k) - p_k} = \mathcal{O}(\varepsilon_k) \quad.
$$
\end{theorem}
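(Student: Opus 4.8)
The plan is to bound $\norm{\nabla f(\lambda_k)-p_k}$ by a triangle inequality that isolates the two sources of error in Algorithm~\ref{alg:hoag}: the inexactness of the inner solution $x_k$ relative to $X(\lambda_k)$, and the inexactness of the linear solve $q_k$. To this end, introduce the intermediate quantity
\[
\tilde p_k \triangleq \nabla_2 g(x_k,\lambda_k) - \nabla^2_{1,2}h(x_k,\lambda_k)^T z_k, \qquad z_k \triangleq \big(\nabla^2_1 h(x_k,\lambda_k)\big)^{-1}\nabla_1 g(x_k,\lambda_k) ,
\]
which is the value the exact gradient formula~\eqref{eq:grad_f_full} would return at the point $(x_k,\lambda_k)$. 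Then $\norm{\nabla f(\lambda_k)-p_k}\le \norm{\nabla f(\lambda_k)-\tilde p_k}+\norm{\tilde p_k-p_k}$, and I would bound each term by $\mathcal O(\varepsilon_k)$ separately.

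\textbf{A uniform spectral bound.} The analysis rests on showing that $\nabla^2_1 h$ is uniformly well-conditioned near the solution set. By (A1), $h$ is $C^2$, so the implicit function theorem applied to $\nabla_1 h(x,\lambda)=0$ together with (A2) shows that $X$ is well defined and continuous on $\DD$; since $\DD$ is compact (A3), the set $\mathcal K\triangleq\{(X(\lambda),\lambda):\lambda\in\DD\}$ is compact. The continuous function $\lambda\mapsto\sigma_{\min}\big(\nabla^2_1 h(X(\lambda),\lambda)\big)$ is positive on $\DD$ by (A2), hence bounded below by some $\sigma>0$; by continuity of $\nabla^2_1 h$ there is a compact neighborhood $\mathcal N$ of $\mathcal K$ on which $\sigma_{\min}(\nabla^2_1 h)\ge\sigma/2$, so that $\norm{(\nabla^2_1 h)^{-1}}\le 2/\sigma$ there and, $\mathcal N$ being compact, $\nabla_1 g,\nabla_2 g,\nabla^2_1 h,\nabla^2_{1,2}h$ are all bounded on it. Once $\varepsilon_k$ has dropped below the radius of a tube around $\mathcal K$ contained in $\mathcal N$ — the regime meant by ``sufficiently large $k$'' — the inner solution $x_k$, which satisfies $\norm{x_k-X(\lambda_k)}\le\varepsilon_k$, lies in $\mathcal N$, and all estimates below take place there.

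\textbf{The two estimates.} For the linear-system term, step (ii) of the algorithm and $\norm{(\nabla^2_1 h(x_k,\lambda_k))^{-1}}\le2/\sigma$ give
\[
\norm{z_k-q_k}=\norm{\big(\nabla^2_1 h(x_k,\lambda_k)\big)^{-1}\big(\nabla_1 g(x_k,\lambda_k)-\nabla^2_1 h(x_k,\lambda_k)\,q_k\big)}\le \tfrac{2}{\sigma}\,\varepsilon_k ,
\]
whence $\norm{\tilde p_k-p_k}=\norm{\nabla^2_{1,2}h(x_k,\lambda_k)^T(z_k-q_k)}\le\tfrac{2}{\sigma}\big(\sup_{\mathcal N}\norm{\nabla^2_{1,2}h}\big)\varepsilon_k$. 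For the inner-solution term, I would show that the map $\Phi(x,\lambda)\triangleq\nabla_2 g(x,\lambda)-\nabla^2_{1,2}h(x,\lambda)^T(\nabla^2_1 h(x,\lambda))^{-1}\nabla_1 g(x,\lambda)$ is Lipschitz in $x$ uniformly on $\mathcal N$, so that $\norm{\nabla f(\lambda_k)-\tilde p_k}=\norm{\Phi(X(\lambda_k),\lambda_k)-\Phi(x_k,\lambda_k)}\le C\norm{X(\lambda_k)-x_k}\le C\varepsilon_k$. The Lipschitz claim is a composition argument: $\nabla_2 g$, $\nabla^2_{1,2}h$, $\nabla_1 g$ are Lipschitz in $x$ by (A1) and bounded on $\mathcal N$; $(\nabla^2_1 h)^{-1}$ is Lipschitz on $\mathcal N$ via the resolvent identity $A^{-1}-B^{-1}=A^{-1}(B-A)B^{-1}$ together with $\norm{(\nabla^2_1 h)^{-1}}\le2/\sigma$ and the Lipschitzness of $\nabla^2_1 h$; and sums and products of bounded Lipschitz maps are Lipschitz. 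Summing the two bounds yields $\norm{\nabla f(\lambda_k)-p_k}\le C''\varepsilon_k$.

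\textbf{Main obstacle.} The only non-mechanical step is the uniform spectral bound: it needs $X$ to be well defined and continuous so that $\mathcal K$ is compact (this is where the implicit function theorem and (A2) enter), and it needs $\varepsilon_k$ small enough that the iterates' inner solutions remain in the neighborhood $\mathcal N$ on which $\nabla^2_1 h$ stays invertible — which is exactly what ``for sufficiently large $k$'' encodes. Once that is in place, the remainder is routine bookkeeping of Lipschitz constants and operator-norm bounds on the compact set $\mathcal N$.
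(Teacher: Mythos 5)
Your proposal is correct and follows essentially the same route as the paper's proof: split the error into the contribution of the inexact inner solution $x_k$ and that of the inexact linear solve $q_k$, then control each by $\mathcal{O}(\varepsilon_k)$ using the Lipschitz assumptions (A1), the boundedness of $(\nabla^2_1 h)^{-1}$ coming from (A2)--(A3), and boundedness of all derivatives on a compact set. The only cosmetic differences are that where you use the resolvent identity to get Lipschitzness of $(\nabla^2_1 h)^{-1}$ the paper instead cites a standard linear-system sensitivity bound, and your explicit compact-neighborhood argument for the uniform spectral bound is, if anything, slightly more careful than the paper's justification that $\|A_k^{-1}\|$ stays bounded.
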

}

This theorem gives a bound on the gradient from the sequence that bounds the inner optimization and the linear system solution. It will be the key ingredient in order to show convergence to a stationary point, which is the main result of this section. As convergence criterion, we consider the norm of the (scaled) gradient mapping $\|\lambda_k - P_{\DD}\big(\lambda_k - \frac{1}{L} \nabla f(x_k)\big)\|$, a generalization of the gradient that ensures convergence to a stationary point \citep[\S 1.4.4]{beck2009gradient}.

% \vbox{
\begin{theorem}\label{thm:convergence} If the tolerance sequence is summable, that is, if $\{\varepsilon\}_{i=1}^n$ is positive and verifies
$$
\sum_{i=1}^\infty \varepsilon_i < \infty \quad,
$$
then the limit of the \HOAG\ iterates verifies the stationary point condition:
$$
\lim_{k \to \infty} \|\lambda_k - P_{\DD}\big(\lambda_k - \frac{1}{L} \nabla f(x_k)\big)\| = 0\,.
$$
Furthermore, if the iterates belong to the domain for sufficiently large $k$, then
$$
\lim_{k \to \infty} \|\nabla f(x_k)\| = 0\,.
$$
\end{theorem}
%}

{\color{mydarkred}{\bf Update 2022.11} The statement above has been edited with respect to the original (published) manuscript.\footnote{\url{http://proceedings.mlr.press/v48/pedregosa16.pdf}} The original proof wrongly claimed the convergence of the iterates $\lambda_k$ to its limit. The revised statement removed this claim and only shows the convergence of the gradient mapping.}

This results gives sufficient conditions for the convergence of \HOAG. The summability of the tolerance sequence suggest several natural candidates for this sequence, such as the quadratic sequence, $\varepsilon_k = k^{-2}$ or the exponential sequence, $\varepsilon_k = \rho^k$, with $0 < \rho < 1$. We will empirically evaluate different tolerance sequences on different problems and different datasets in the next section.

\section*{Experiments}\label{scs:experiments}

In this section we compare the empirical performance of \HOAG. We start by discussing some implementation details such as the choice of step size. Then, we compare the convergence of different tolerance decrease strategies that were suggested by the theoretical analysis. In a third part, we compare the performance of \HOAG\ against other hyperparameter optimization methods.

{\bf Adaptive step size}. Our algorithm relies on the knowledge of the Lipschitz constant $L$ for the cost function $f$. However, in practice this is not known in advance. Furthermore, since the cost function is costly to evaluate, it is not feasible to perform backtracking line search. To overcome this we use a procedure in which the step size is corrected depending on the gain estimated from the previous step. In the experiments we use this technique although we do not have a formal analysis of the algorithm for this choice of step size.

Let $\Delta_k$ denote the distance between the current iterate and the past iterate, $\Delta_k = \|\lambda_k - \lambda_{k-1}\|$. The $L$-smooth property of the function $g$, together with Lemma~\ref{thm:bound_gradient}, implies that there exists a constant $M > 0$ such that the following inequality is verified:
\begin{equation} \label{eq:condition_line_search}
\begin{aligned}
g(\lambda_k, x_k) \leq ~&g(\lambda_{k-1}, x_{k-1}) + C \varepsilon_k + \\
&\varepsilon_{k-1} (C + M) \Delta_k - L\Delta_k^2 \quad,
\end{aligned}
\end{equation}
where $C$ is the Lipschitz constant of $g$ (for loss functions such as logistic or least squares this can easily be computed from the data). This inequality can be derived from the properties of \mbox{$L$-smooth} functions, and the details can be found in Appendix~\ref{scs:appendix_experiments}. The procedure consists in decreasing the step (multiplication by $\alpha < 1$) whenever the equation is not satisfied and to increase it (multiplication by $\beta > 1$) whenever the equation is satisfied to ensure that we are using a step size as large as possible. The constants that we used in the experiments are $M=1, \alpha = 0.5, \beta = 1.05$.

% More precisely, the procedure is as follows:
% \begin{enumerate}[(i)]
% \item Choose constants $M> 0$ and $\alpha < 1 < \beta$. The constants $\alpha$ and $\beta$ regulate the increase or decrease in each step of this procedure.
% We found that $M = 1, \alpha = 0.8, \beta=1.2$ give reasonable results and are the constants we will use in the experiments.
% \item If $L$ verifies Eq.~\eqref{eq:condition_line_search}, then decrease the Lipchitz constant (that is, increase the step size): $L = \alpha L$.
% \item If $L$ does not verify Eq.~\eqref{eq:condition_line_search}, then increase the Lipschitz (that is, decrease the step size): $L = \beta L$.
% \end{enumerate}]

{\bf Stopping criterion}. The stopping criterion given in Algorithm~\ref{alg:hoag} depends on $X(\lambda)$ which is generally unknown. However, for objective functions in the inner optimization which are $\mu$-strongly convex ($\mu / 2$ can be taken as the amount of regularization in $\ell_2$-regularized objectives), it is possible to lower bound the quantity ${\|X(\lambda_k) - x_k\|}$ by $\mu^{-1} \|g'(\lambda_k, x_k)\|$. Hence, it is sufficient to ensure ${ \mu^{-1}\|g'(\lambda_k, x_k)\|  \leq \varepsilon}$. Details can be found in Appendix~\ref{scs:appendix_experiments}. 

{\bf Initialization}. The previous sections tells us how to adjust the step size but relies on an initial value of this parameter. We have found that a reasonable initialization is to initalize it to $L = \norm{p_1}$ so that the first update in \HOAG\ is of magnitude at most $1$ (it can be smaller due to the projection), where $p_1$ is the approximate gradient on the first step. The initialization of the tolerance decrease sequence is set to $\varepsilon_1 = 0.1$. We also limit the maximum precision to avoid numerical instabilities to $10^{-12}$, which is also the precision for ``exact'' methods, i.e., those that do not use a tolerance sequence. The initialization of regularization parameters is set to $0$ and the width of an RBF kernel is initialized to $- \log(\text{n\_feat})$, where n\_feat is the number of features or dimensionality of the dataset. 

Although \HOAG\ can be applied more generally, in our experiments we focus on two problems: $\ell_2$-regularized logistic regression and kernel Ridge regression. We follow the setting described in Eq.~\eqref{eq:ml_ho_problem}, in which an initial dataset is partitioned into two sets, a train set $\mathcal{S}_{\text{train}} = \{(b_i, a_i)\}_{i=1}^n$ and a test set $\mathcal{S}_{\text{test}} = \{(b'_i, a'_i)\}_{i=1}^m$, where $a_i$ denotes the input features and $b_i$ the target variables.

The first problem consists in estimating the regularization parameter in the widely-used $\ell_2$-regularized logistic regression model. In this case, the loss function of the inner optimization problem is the regularized logistic loss function. In the setting of classification, the validation loss or outer cost function is commonly the zero-one loss. However, this loss is non-smooth and so does not verify assumption (A1). To overcome this and following~\citep{NIPS2007_3286}, we use the logistic loss as the validation loss. This yield a problem of the form:
% \\[-5mm]
\begin{equation}\label{eq:logistic_loss}
\begin{aligned} \argmin_{\lambda \in \DD} &\sum_{i=1}^m \psi({b'}_i {a'}_i^T X(\lambda)) \\
\text{s.t.}~X(\lambda) \in \argmin_{x \in \RR^p} &\sum_{i=1}^n \psi(b_i a_i^T x) + e^{\lambda} \|x\|^2 \quad,
\end{aligned}
\end{equation}
% \\[-3mm]
where $\psi$ is the logistic loss, i.e., $\psi(t) = \log(1 + e^{-t})$.
The second problem that we consider is that of kernel Ridge regression with an RBF kernel. In this setting, the problem contains two hyperparameters: the first hyperparameter ($\lambda_1$) controls the width of the RBK kernel and the second hyperparameter ($\lambda_2$) controls the amount of regularization. The inner optimization depends on the kernel through the kernel matrix, formed by computing the kernel of all pairwise input samples. We denote such matrix as $K(\gamma)_{\text{train}}$, where the $(i,j)$ entry is given by $k(a_i, a_j, \gamma)$, where $k$ is the RBF kernel function: $k(a_i, a_j, \gamma) = \exp(- \gamma \| a_i - a_j\|)$. Similarly, the outer optimization also depends on the kernel through the matrix $K(\gamma)_{\text{test}}$, where its entries are the kernel product between features from the train set and features from the test set, that is, $k(a_i, a'_j, \gamma)$. Denoting the full hyperparameter vector as $\lambda = [\lambda_1, \lambda_2]$, the kernel matrix on the train set as, the full hyperparameter optimization problem takes the form
\begin{equation}\label{eq:kernel_ridge}
\begin{aligned} &\argmin_{\lambda \in \DD} ~\norm{b - K_{\text{test}}(e^{\lambda_1}) X(\lambda)}^2 \\
\text{s.t.}~ &\left(K_{\text{train}}(e^{\lambda_1}) + e^{\lambda_2} I \right) X(\lambda) = b \quad,
\end{aligned}
\end{equation}
where for simplicity the inner optimization is already set as an implicit equation. Note that in this setting, and unlike in the logistic regression problem, the outer optimization function depends on the hyperparameters not only through the model parameters $X(\lambda)$ but also through the kernel matrix.

The solver used for the inner optimization problem of the logistic regression problem is L-BFGS~\citep{liu1989limited}, while for Ridge regression we used a linear conjugate descent method. In all cases, the domain for hyperparameters is the $s$-dimensional interval $[-12, 12]^s$.

\begin{figure*}
\includegraphics[width=.33\linewidth]{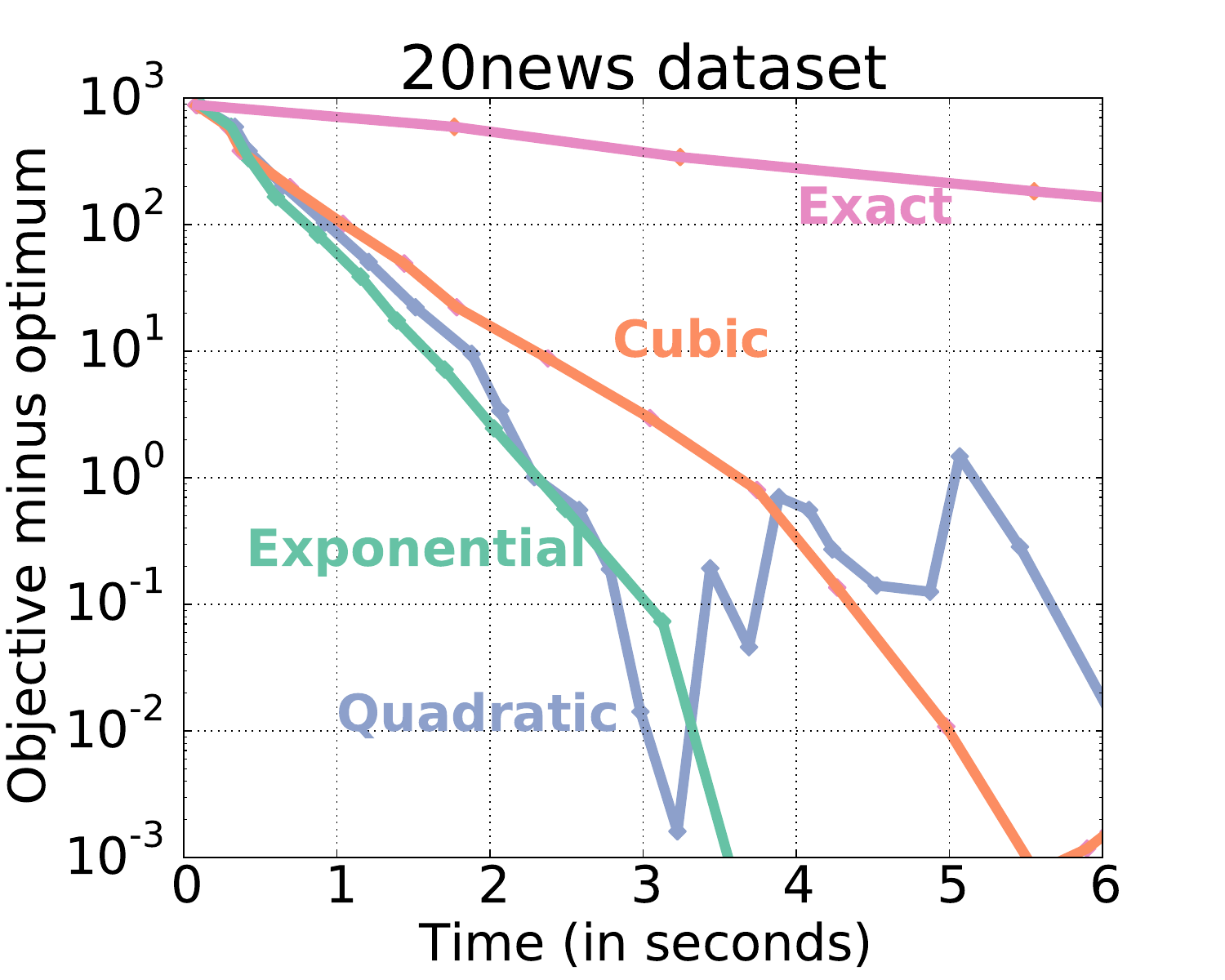}
\includegraphics[width=.33\linewidth]{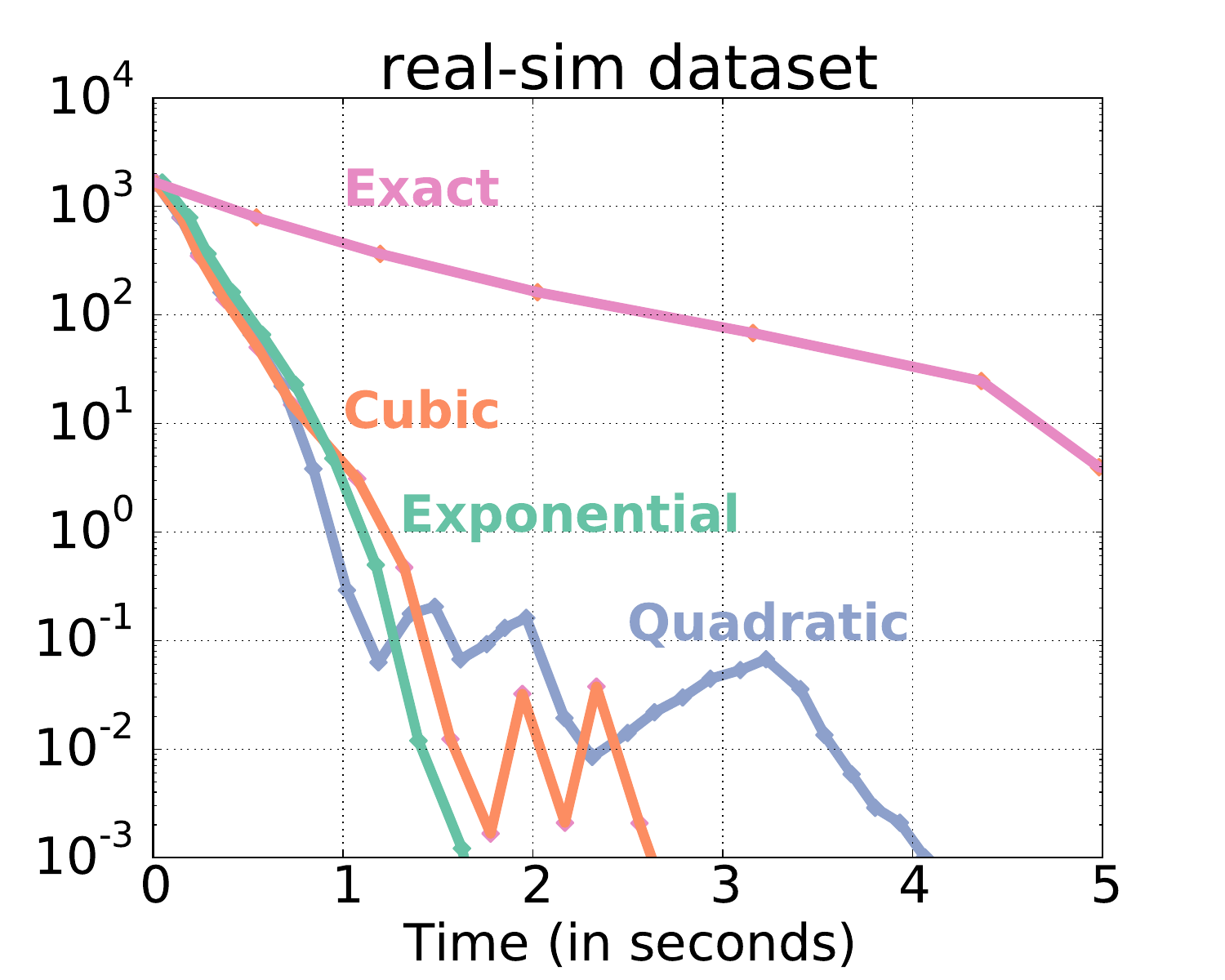}
\includegraphics[width=.33\linewidth]{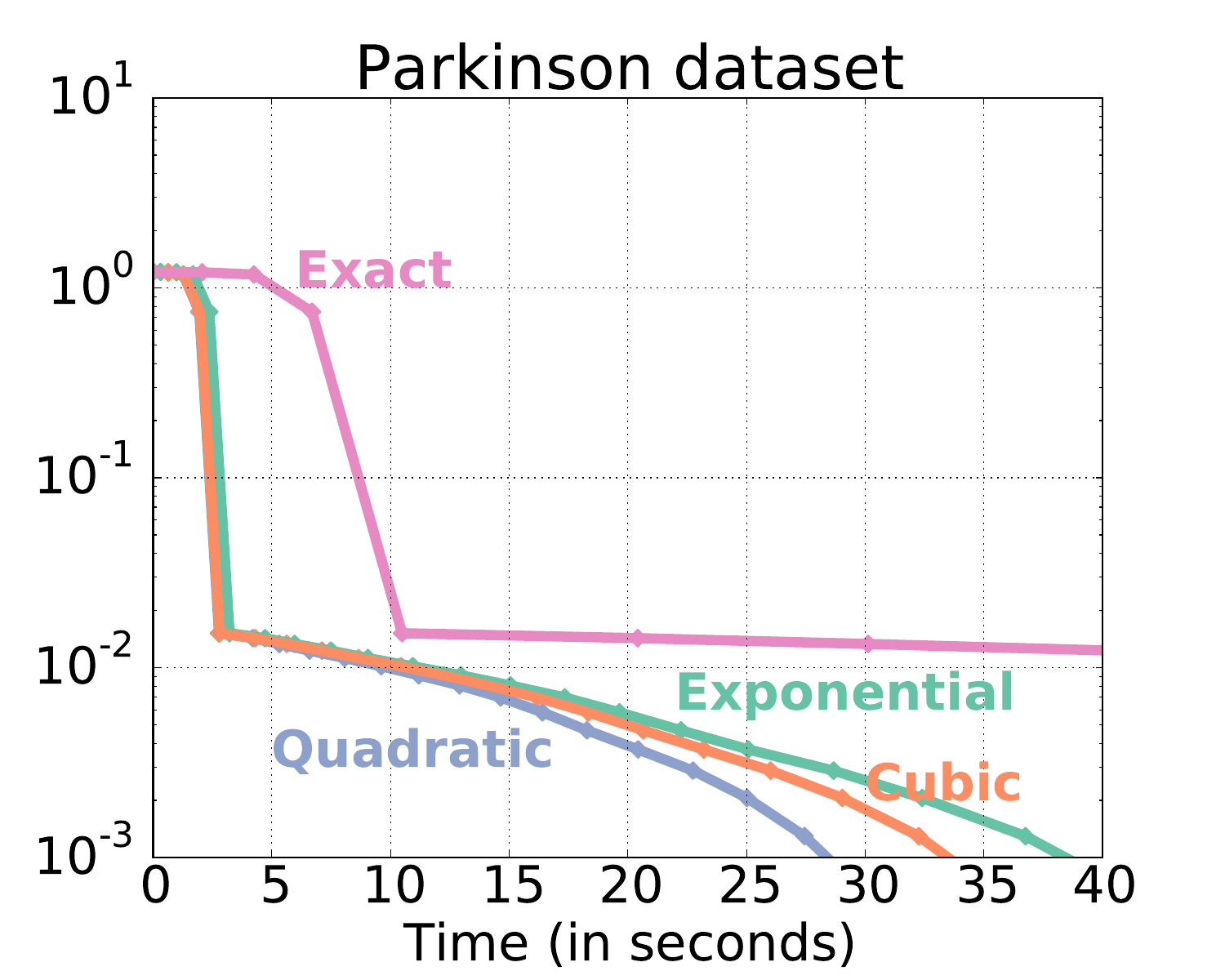}
\caption{{\bf Tolerance decrease strategies}. Suboptimality as a function of time for different tolerance decrease strategies. The decrease sequences considered are quadratic ($0.1 k^{-2}$), cubic ($0.1 k^{-3}$), exponential ($0.1 \times 0.9^{k}$) and exact (gradient is computed to full 
accuracy at every iteration). Non-exact methods exhibit smaller cost per iteration, which results in faster convergence.}\label{fig:bench_tolerance}
\end{figure*}

For the experiments, we use four {\bf different datasets}. The dataset 20news and real-sim are studied with an $\ell_2$-regularized logistic regression model (1 hyperparameter) while the Parkinson dataset using a Kernel ridge regression model (2 hyperparameters). The MNIST dataset is investigated in a high-dimensional hyperparameter space using a similar setting to~\citep[\S 3.2]{MacDuvAda2015hyper} and reported in in Appendix~\ref{scs:appendix_experiments}. Datasets and models are described in more detail in Appendix~\ref{scs:appendix_experiments}.

% {\blue The third dataset, which we use for the multitask study, is the School Data Experiment dataset. This
% data consists of examination records of 15362 students from 139 secondary schools. The goal is to predict the exam scores of the students based on input such as year of exam, gender, ethnic group etc. We represented the categorical variables using binary (dummy) variables, so the total number of inputs for each student in each of the schools was 27. This dataset was studied in~\citep{evgeniou2005learning} among others.}

In all cases, the dataset is randomly split in three equally sized parts: a train set, test set and a third validation set that we will use to measure the generalization performance of the different approaches.

\subsection{Tolerance decrease sequence}

We report in Figure~\ref{fig:bench_tolerance} the convergence of different tolerance decrease strategies. From Theorem~\ref{thm:convergence}, the sole condition on these sequences is that they are summable. Three notable examples of summable sequences are the quadratic, cubic and exponential sequences. Hence, we choose one representative of each of these strategies. More precisely, the decrease sequences that we choose are a quadratic decrease sequence of the form $\varepsilon_k = 0.1 \times k^{-2}$, a cubic one of the form $\varepsilon_k = 0.1 \times k^{-3}$ and an exponential of the form $\varepsilon_k = 0.1 \times (0.9^{k})$. The value taken as true minima of the hyperparameter optimization problem is computed by taken the minimum reached by 10 randomly initialized instances of \HOAG\ with exponential decrease tolerance.

The plot shows the relative accuracy of the different variants as a function of time. It can be seen that non-exact methods feature a cheaper iteration cost, yielding a faster convergence overall. Note that despite the global convergence result of Theorem~\ref{thm:convergence}, \HOAG\ is not guaranteed to be monotonically decreasing, and in fact, some degree of oscillation is expected when the decrease in the tolerance does not match the convergence rate (see e.g.~\citet{schmidt2011convergence}). This can be appreciated in Figure~\ref{fig:bench_tolerance}, where the quadratic decrease sequence (and to some extent the cubit too) exhibits oscillations in the two first plots.

\subsection{Comparison with other hyperparameter optimization methods}

We now compare against other hyperparameter optimization methods. The methods against which we compare are:

\begin{itemize}
\item {\bf \HOAG}. The method we present in this paper, with an exponentially decreasing tolerance sequence. A Python implementation is made freely available at {\small \url{https://github.com/fabianp/hoag}}.
\item {\bf Grid Search}. This method consists simply in splitting the domain of the hyperparameter into an equally-spaced grid. We split the interval $[-12, 12]$ into a grid of $10$ values.
\item {\bf Random}. This is the random search method~\citep{bergstra2012random} samples the hyperparameters from a predefined distribution. We choose to samples from a uniform distribution in the interval $[-12, 12]$.
\item {\bf SMBO}. Sequential model-based optimization using Gaussian Process. We used the implementation found in the Python package BayesianOptimization ({\small \url{http://github.com/fmfn/BayesianOptimization/}}). As initialization for this method, we choose $4$ values equally spaced between $-12$ and $12$. The acquisition function used is the expected improvement.
\item {\bf Iterdiff}. This is the iterative differentiation approach from~\citep{domke2012generic}, using the same inner-optimization algorithm as \HOAG. While the original implementation used to have a backtracking line search procedure to estimate the step size, we found that this performed worst than any of the alternatives. For this reason, we use the adaptive step size strategy presented in Section~\ref{scs:experiments} (assuming a zero tolerance parameter $\varepsilon$). 
\end{itemize}

\begin{figure*}\tabularnewline
\includegraphics[width=.33\linewidth]{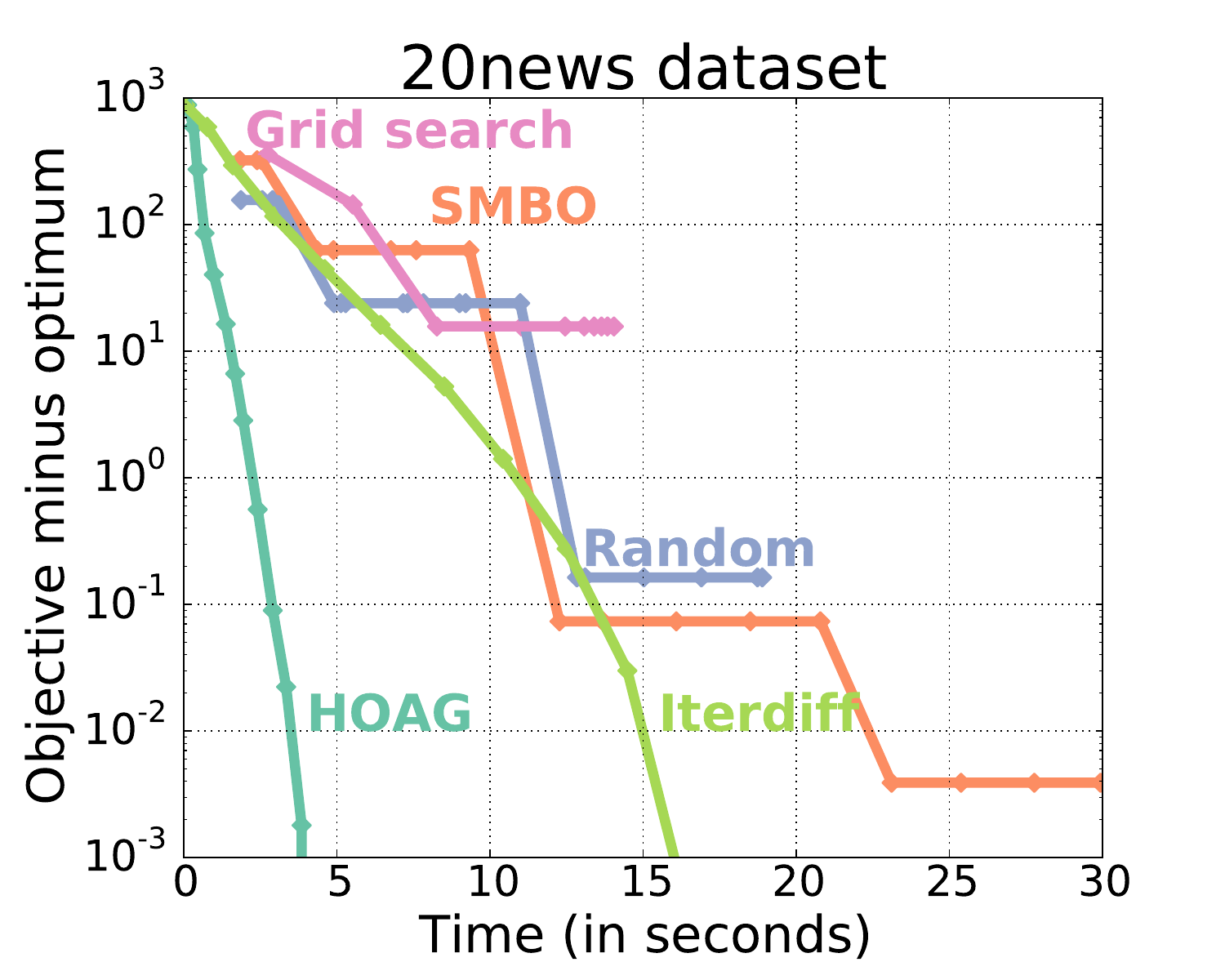}
\includegraphics[width=.33\linewidth]{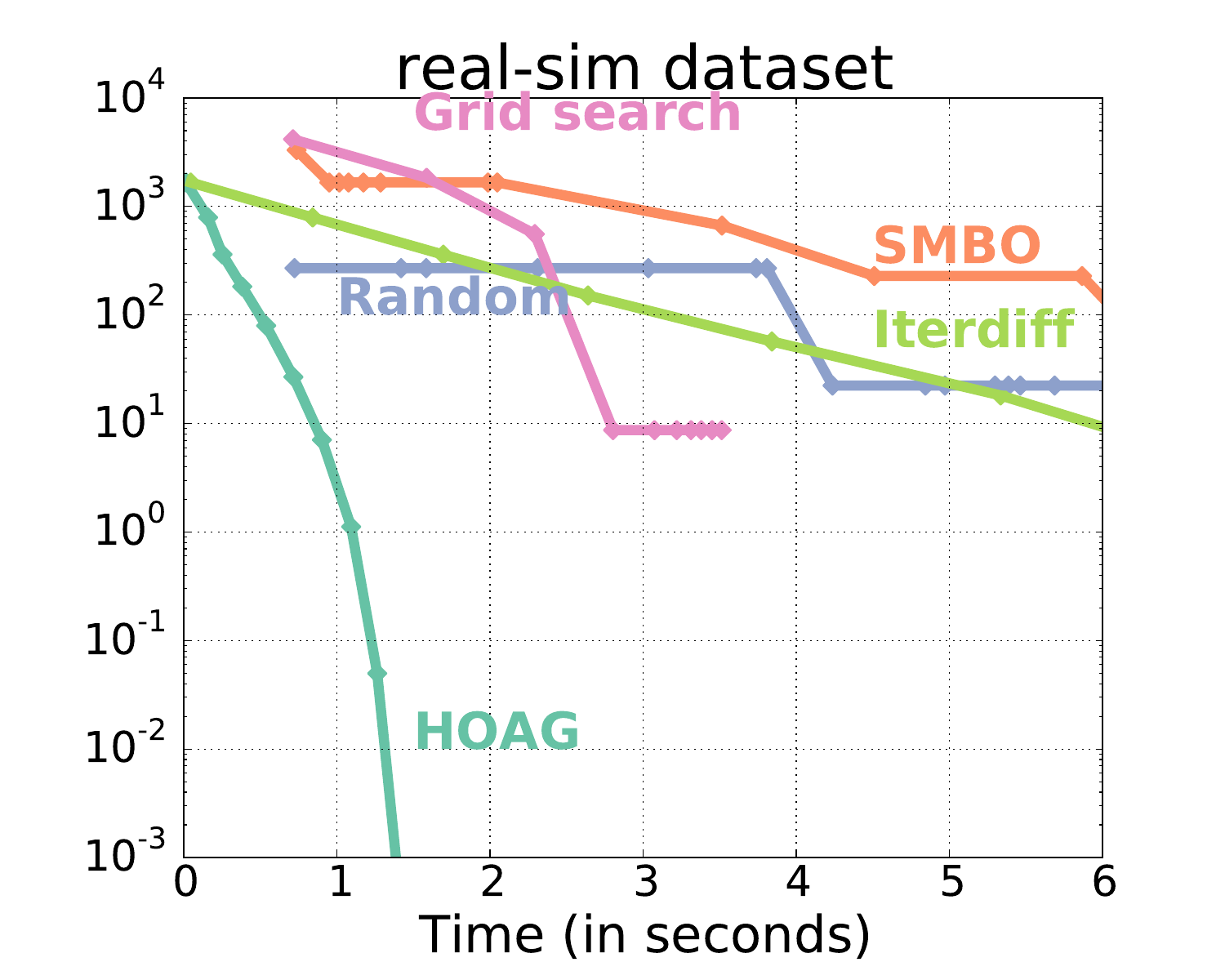}
\includegraphics[width=.33\linewidth]{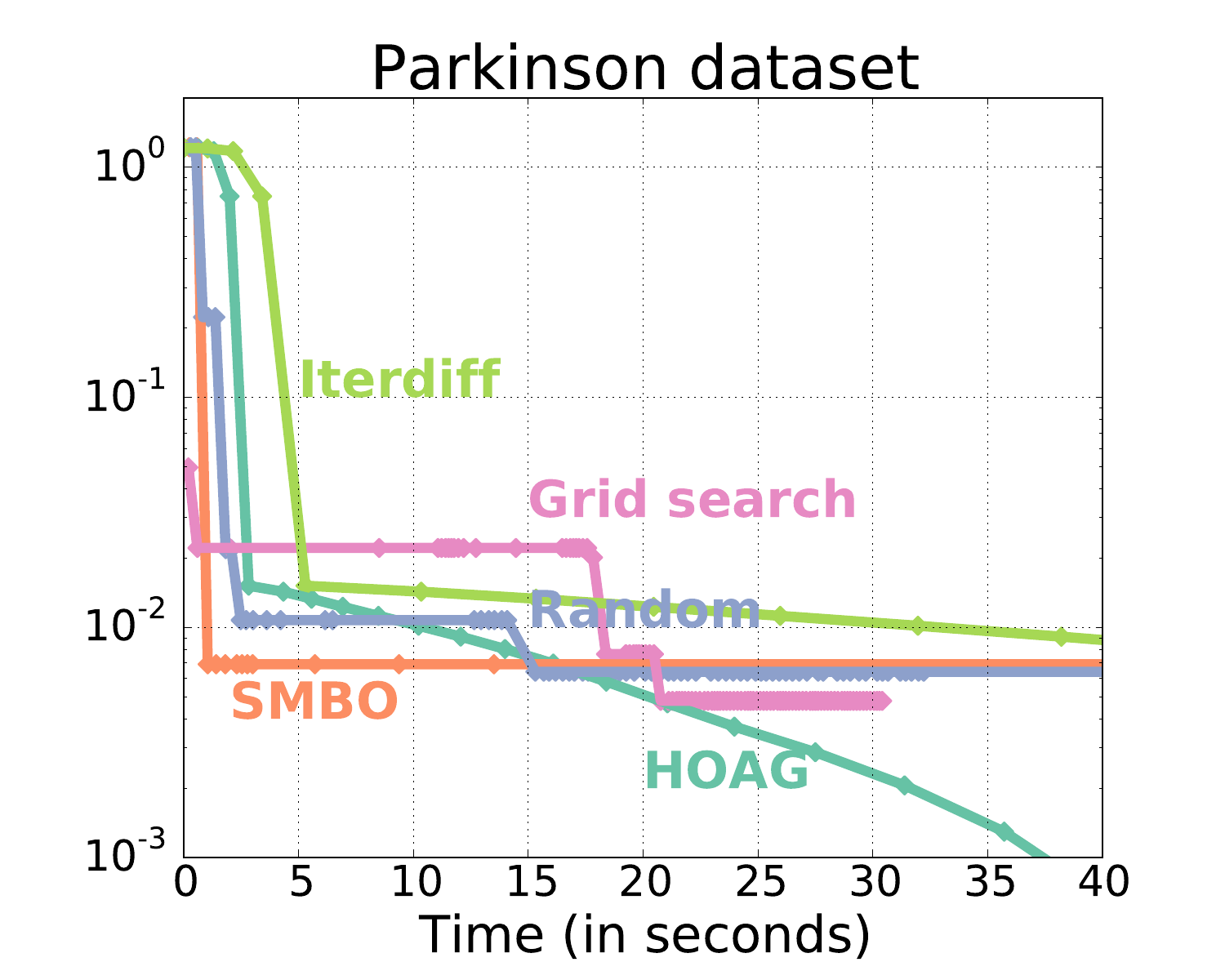}
\includegraphics[width=.33\linewidth]{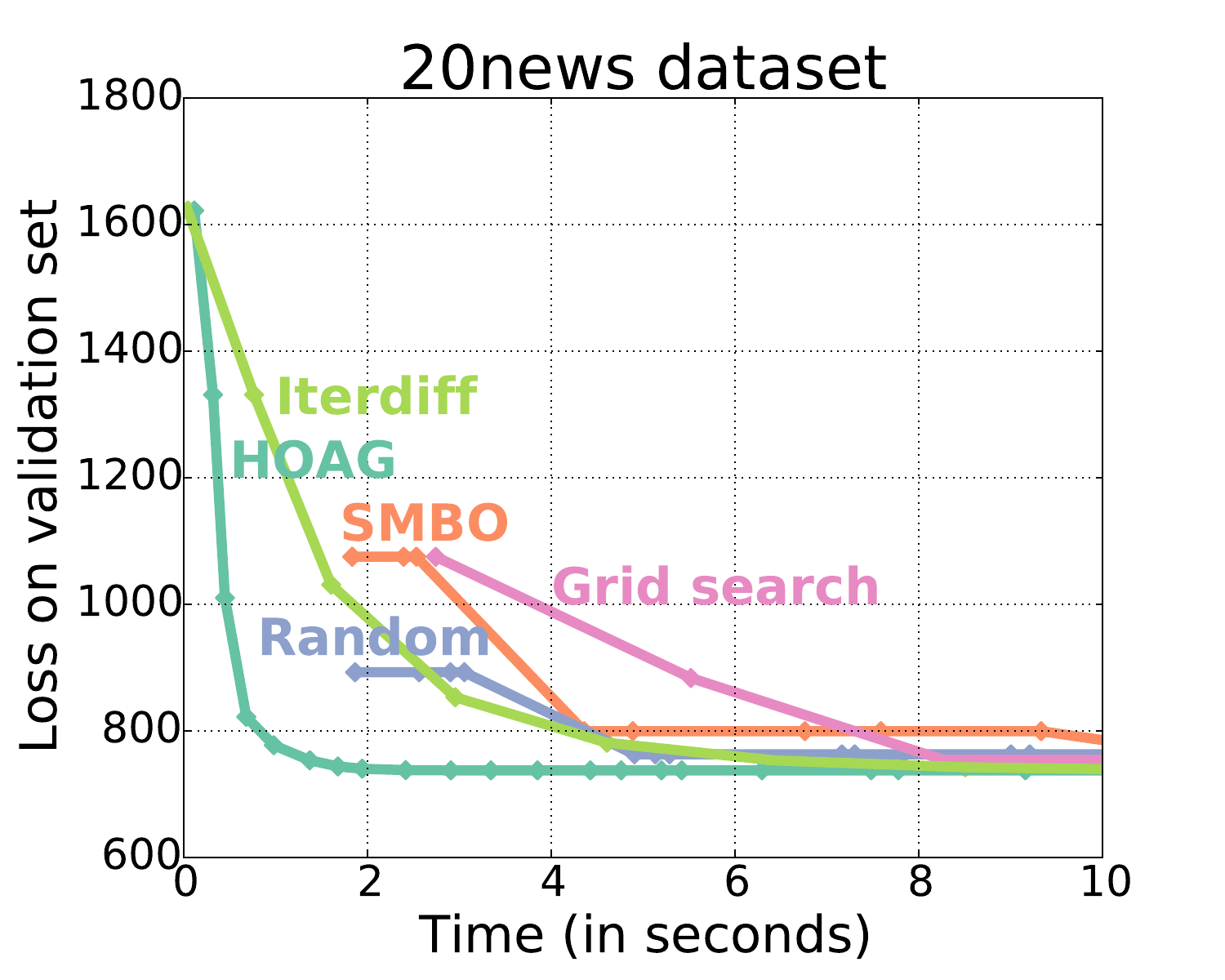}
\includegraphics[width=.33\linewidth]{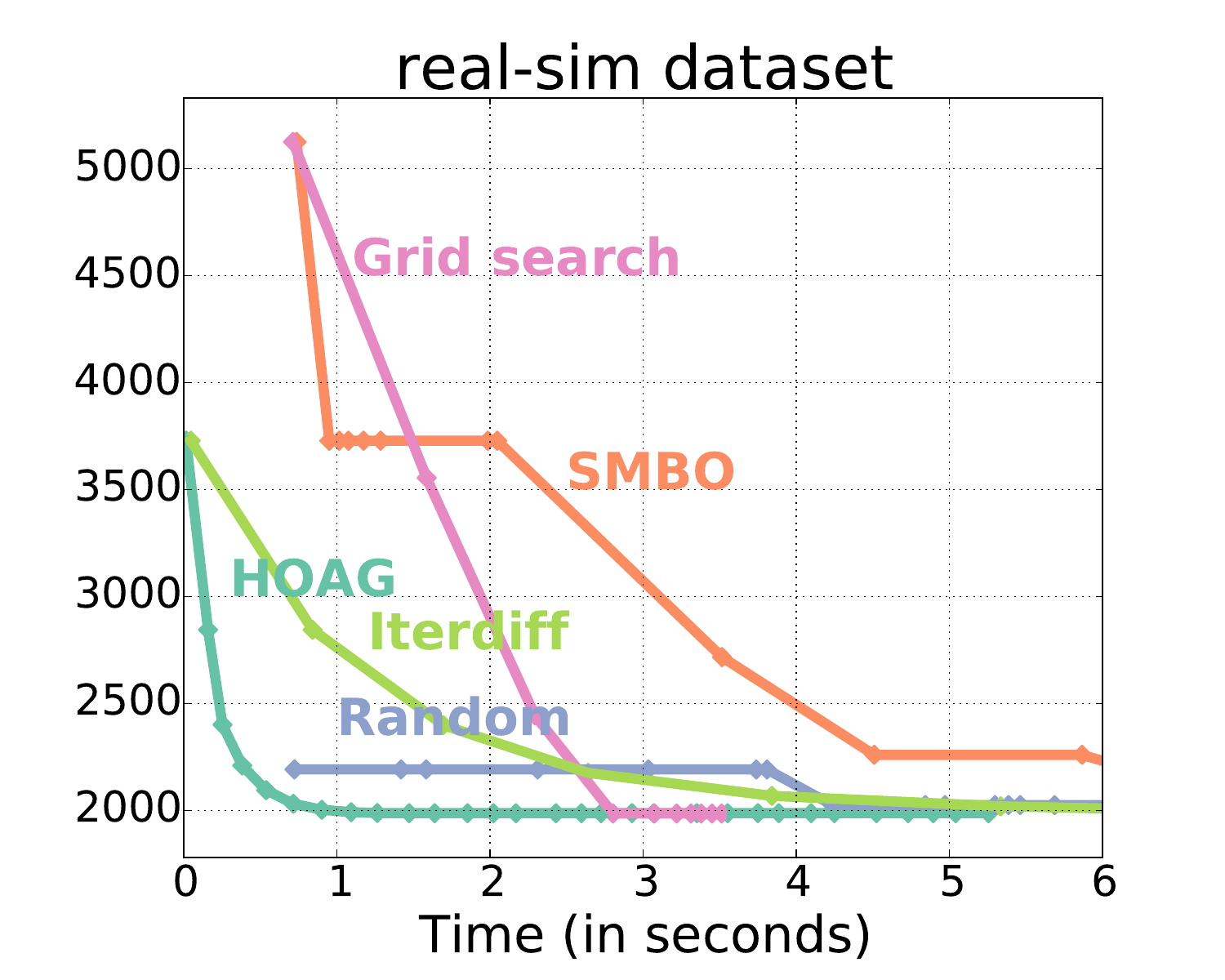}
\includegraphics[width=.33\linewidth]{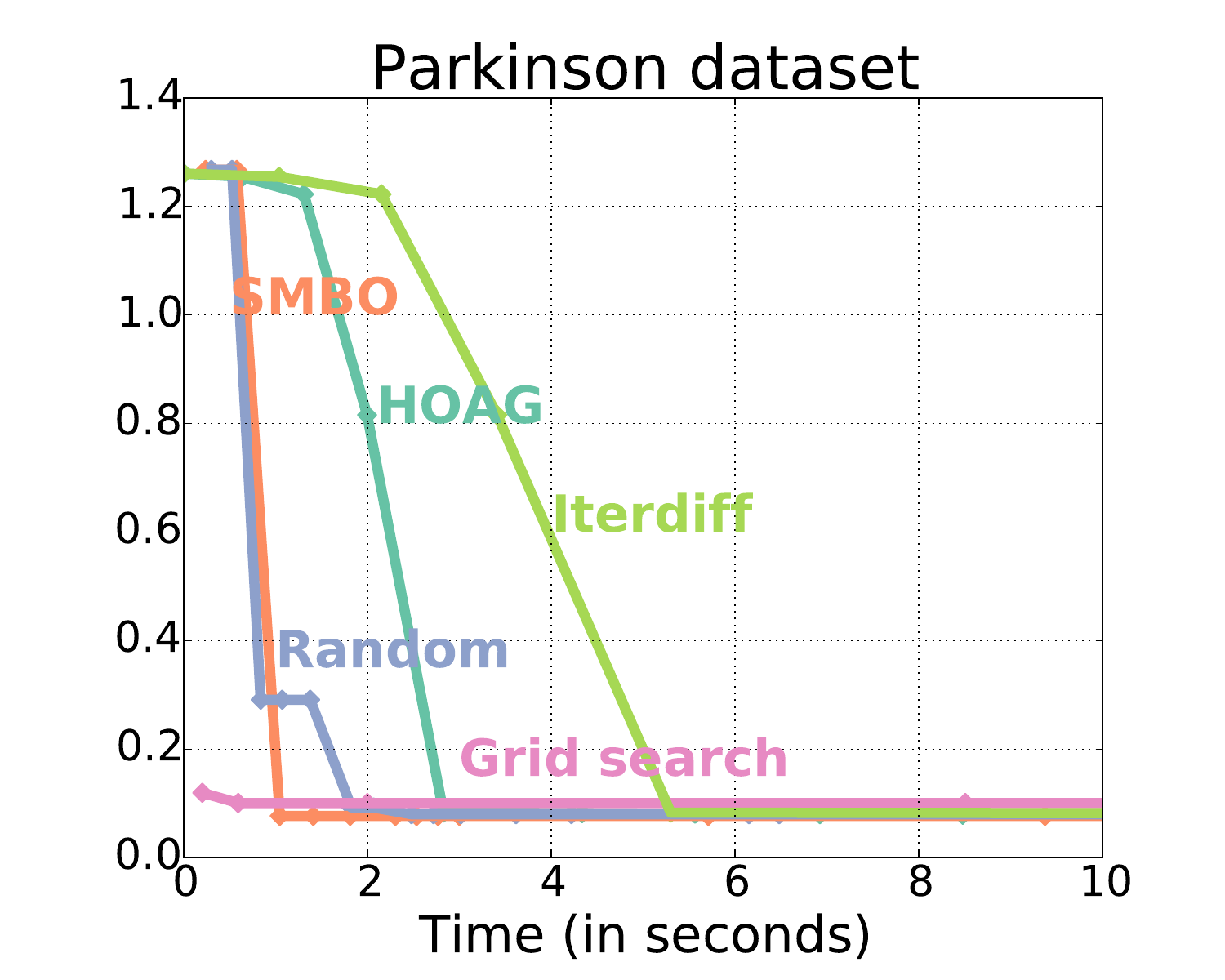}
\caption{{\bf Hyperparameter optimization methods}. Top row: suboptimality of the different methods in terms of the test loss. Bottom row: loss measured on a validation set for the different methods.}\label{fig:bench_ho}
\end{figure*}

For all methods, the number of iterations used in the inner optimization algorithm (L-BFGS or GD) is set to 100, which is the same used by the other methods and the default in the scikit-learn ({\small \url{http://scikit-learn.org}}) package.

We report in Figure~\ref{fig:bench_ho} the results of comparing the accuracy of these methods as a function of time. Note that it is expected that the different methods have different starting points. This is because Grid Search and SMBO naturally start from a pre-defined grid that starts from the extremes of the interval, while random search simply chooses a random point from the domain. For \HOAG\ and Iterdiff, we take the initialization $\lambda_1=0$.

In the upper row of Figure~\ref{fig:bench_ho} we can see the suboptimality of the different procedures as a function of time. We observe that \HOAG\ and Iterdiff have similar behavior, although \HOAG\ features a smaller cost per iteration. This can be explained because once \HOAG\ has made a step it can use the previous solution of the inner optimization problem as a \emph{warm-start} to compute the next gradient. This is not the case in Iterdiff since the computation of the gradient relies crucially on having sufficient iterations of the inner optimization algorithm. 

We note that in the Parkinson dataset, solution is inside a region that is almost flat (the different cost functions can be seen in Figure 1 of the supplementary material). This can explain the difficulty of the methods to go beyond the $10^{-2}$ suboptimality level. In this case, SMBO, who starts by computing the cost function at the extremes of the domain converges instantly to this region, which explains its fast convergence, although it is unable to improve the initially reached suboptimality.

Suboptimality plots are a standard way to compare the performance of different optimization methods. However, for the context of machine learning it can be argued that estimating hyperparameters up to a high precision is unimportant and that methods should be compared in terms of generalization performance. In the lower row of Figure~\ref{fig:bench_ho}, we display the test loss ($g$) on a validation set, that is, using a third set of samples $\{(\tilde{b}_i, \tilde{a}_i)\}_{i=1}^r$ which is different from both the train and test set. This figure reveals two main effects. First, unsurprisingly, optimization beyond $10^{-2}$ of relative suboptimality is not reflected in this metric. Second, the fast (but noisy) early iterations of \HOAG\ achieve the fastest convergence in two out of three datasets.

% For both datasets, HOAG achieves state-of-the art results. For example, in the fMRI dataset, HOAG, as random search and SMBO achieve fast convergence on early iterations, but unlike the latter HOAG exhibits linear convergence in the high-precision regime. The difference is only bigger in the Synthetic dataset, where HOAG reaches a tolerance of $10^{-3}$ within only 3 model evaluations of random search or SMBO.

\section{Discussion and future work}\label{scs:discussion}

% We have presented a method for hyperparameter optimization based on a gradient descent algorithm with inexact gradient information. {\blue We} have presented the algorithm and its convergence properties that we called HOAG. {\blue We} have also discussed implementation details and demonstrated its empirical performance on different datasets. 

In previous sections we have presented and discussed several aspects of the \HOAG\ algorithm. Finally, we outline some future directions that we think are worthwhile exploring.

Given the success of recent stochastic optimization techniques~\citep{schmidt2013minimizing,johnson2013accelerating} it seems natural to study a {\bf stochastic variant} of this algorithm, that is, one in which the updates in the inner and outer optimization schemes have a cost that is independent of the number of samples. However, the dependency on the Hessian of the inner optimization ($\nabla^2_1 h$) in the implicit equation~\eqref{eq:grad_f_full} makes this non-trivial. 
% A possible line of research is to use sub-sampled Hessian approaches~\citep{byrd2011use}.

Little is known of the {\bf structure of solutions} for the hyperparameter optimization problem~\eqref{eq:general_ho}. In fact, assumption (A3) is introduced almost exclusively in order to guarantee existence of solutions. At the same time recent progress on the setting of image restoration, which can be considered a subproblem of (HO), has given sufficient conditions on the input data for such solution to exist in an unbounded domain~\citep{reyes2015structure}. The characterization of solutions for the HO problem can potentially simplify the assumptions made in this paper. 
% Furthermore, it can shed light on good initializations to avoid 

% We have empirically observed that the cost function often contains {\bf flat regions} for extremely high or low values of the regularization parameters (see Figure 1 in the supplementary material for some examples). If the initial step size is too big, we have found that the algorithm can occasionally inter into this region, from which convergence to the global minima (if any) is very slow. Different initializations or the addition of a regularization term in the outer optimization loop as done by~\citet{los2012image} suggest themselves to mitigate this effect.

% {\blue For its practical importance we have focused in this paper on models with one or two hyperparameters. However, as noted by several authors, the use of gradient-based hyperparameter optimization opens the door for the optimization of {\bf high-dimensional hyperparameter spaces}. With the ever-growing size and complexity of modern datasets, this offers an exciting perspective for future work.}

The analysis presented in this paper can be extended in several ways. For instance, the analysis of \HOAG\ is provided for a constant step size and not for the {\bf adaptive step size strategy} used in the experiments. Also, we have focused on proving asymptotic convergence of our algorithm. An interesting future direction would be to study {\bf rates of convergence}, which might give insight into an optimal choice for the tolerance decrease sequence.

Although we found the method to be quite robust in practice, there are situations where it can get stuck in flat regions. For example, if the initial step is too big, it might land in a region with a large regularization parameter where the curvature is amost zero (hence the reason to normalize the first step by its norm). An interesting direction of future work is to make the method robust to such flat regions, scaping from flat regions and allowing the method to make bigger steps in early iterations.

% This offers an exciting perspective, and we believe that with the ever-increasing size of modern datasets and the development of gradient-based hyperparameter optimization methods this could finally become a reality.
% Furthermore, the use of such models also raises other challenges derived from the 

% Finally, our convergence proof has focused on proving the asymptotic convergence of the HOAG algorithm. A non-asymptotic analysis of this algorithm

\subsection*{Acknowledgments}

I am in debt with Gabriel Peyr\'e for numerous discussions, suggestions and pointers. I would equally like to thank the anonymous reviewers for many insightful comments, and to Justin Domke for posting the code of his Iterative differentiation method. {\color{mydarkred}{\bf Update 2022.11}: I'm grateful to Clarice Poon for pointing an error in the proof of Theorem 2 in October 2022.}

Feedback and comments are welcome at the author's blog ({\small \url{http://goo.gl/WoV8R5}}).

The author acknowledges financial support from the ``Chaire Economie
des Nouvelles Donn\'ees'', under the auspices of Institut Louis Bachelier, Havas-Media and
Universit\'e Paris-Dauphine (ANR 11-LABX-0019).

\bibliography{biblio}{}

\clearpage
\appendix

\title{Appendix}
\date{}
\maketitle

\section{Analysis} \label{scs:appendix_analysis}

\begin{lemma}[The gradient error is bounded]
For sufficiently large $k$, the error in the gradient is bounded by a constant factor of $\varepsilon_k$. That is,
$$
\norm{\nabla f(\lambda_k) - p_k} = \mathcal{O}(\varepsilon_k) \quad.
$$
\end{lemma}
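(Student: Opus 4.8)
The plan is to compare $p_k$ with the exact gradient $\nabla f(\lambda_k)$ given by formula~\eqref{eq:grad_f_full}, by tracking how the two sources of inexactness — using $x_k$ instead of $X(\lambda_k)$, and using the approximate linear-system solution $q_k$ instead of $\left(\nabla_1^2 h\right)^{-1}\nabla_1 g$ — propagate through the algebraic expression for the gradient. Writing $\nabla f(\lambda_k) = \nabla_2 g(X(\lambda_k),\lambda_k) - \nabla^2_{1,2}h(X(\lambda_k),\lambda_k)^T q^\star$ with $q^\star = \left(\nabla_1^2 h(X(\lambda_k),\lambda_k)\right)^{-1}\nabla_1 g(X(\lambda_k),\lambda_k)$, I would split $\norm{\nabla f(\lambda_k) - p_k}$ by the triangle inequality into (a) terms coming from evaluating $\nabla_2 g$, $\nabla^2_{1,2}h$, $\nabla_1^2 h$, $\nabla_1 g$ at $x_k$ rather than $X(\lambda_k)$, and (b) the term coming from $q_k$ not solving the linear system exactly.

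First I would record the consequences of the assumptions: by (A1) the maps $\nabla_2 g$, $\nabla^2_{1,2}h$, $\nabla_1^2 h$, $\nabla_1 g$ are Lipschitz in their first argument, so quantities like $\norm{\nabla_2 g(x_k,\lambda_k) - \nabla_2 g(X(\lambda_k),\lambda_k)}$ are $\mathcal{O}(\norm{x_k - X(\lambda_k)}) \le \mathcal{O}(\varepsilon_k)$ by step (i). I also need these quantities to be \emph{bounded} along the sequence: since $\DD$ is compact (A3) and $X$ is continuous (it is implicitly differentiable by (A1)–(A2)), the image $\{X(\lambda):\lambda\in\DD\}$ is contained in a compact set, and for $k$ large enough $\varepsilon_k$ is small so $x_k$ stays in a bounded neighborhood of it; hence $\nabla_1 g$, $\nabla^2_{1,2}h$ are uniformly bounded there, and by (A2) plus continuity of $\nabla_1^2 h$, the inverse $\left(\nabla_1^2 h\right)^{-1}$ is uniformly bounded in operator norm on that compact set (this is the standard "continuous invertible matrix on a compact set has bounded inverse" argument, using that for $k$ large the perturbation keeps $\nabla_1^2 h(x_k,\lambda_k)$ invertible with controlled inverse). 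This is why the statement is only claimed "for sufficiently large $k$."

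For part (b): write $\nabla^2_{1,2}h(x_k,\lambda_k)^T q_k - \nabla^2_{1,2}h(x_k,\lambda_k)^T q^\star_k$ where $q^\star_k = \left(\nabla_1^2 h(x_k,\lambda_k)\right)^{-1}\nabla_1 g(x_k,\lambda_k)$ is the \emph{exact} solution of the system actually being solved in step (ii). Then $\norm{q_k - q^\star_k} = \norm{\left(\nabla_1^2 h(x_k,\lambda_k)\right)^{-1}\left(\nabla_1^2 h(x_k,\lambda_k) q_k - \nabla_1 g(x_k,\lambda_k)\right)} \le \norm{\left(\nabla_1^2 h(x_k,\lambda_k)\right)^{-1}} \cdot \varepsilon_k = \mathcal{O}(\varepsilon_k)$ using step (ii) and the uniform bound on the inverse. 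Multiplying by the bounded $\nabla^2_{1,2}h(x_k,\lambda_k)^T$ keeps it $\mathcal{O}(\varepsilon_k)$. It remains to compare $\nabla^2_{1,2}h(x_k,\lambda_k)^T q^\star_k$ with $\nabla^2_{1,2}h(X(\lambda_k),\lambda_k)^T q^\star$; this is a difference of products of quantities that are each either Lipschitz-close (distance $\mathcal{O}(\varepsilon_k)$) or bounded — including $\norm{q^\star_k - q^\star}$, which is controlled by Lipschitzness of $\nabla_1 g$ and of $\left(\nabla_1^2 h\right)^{-1}$ (the latter via the identity $A^{-1}-B^{-1} = A^{-1}(B-A)B^{-1}$) — so a routine "add and subtract" estimate bounds it by $\mathcal{O}(\varepsilon_k)$. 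Collecting all the $\mathcal{O}(\varepsilon_k)$ contributions proves the claim.

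The main obstacle is the uniform invertibility / boundedness of $\left(\nabla_1^2 h\right)^{-1}$ along the iterate sequence: (A2) only gives invertibility at the exact points $(X(\lambda),\lambda)$, so I need a compactness-plus-continuity argument to transfer this to a bound that is uniform in $\lambda\in\DD$ and then a perturbation argument to transfer it to the nearby points $(x_k,\lambda_k)$ for large $k$ — which is precisely where the "sufficiently large $k$" hypothesis and the compactness assumption (A3) are doing their work. Everything after that is bookkeeping with the triangle inequality and Lipschitz constants.
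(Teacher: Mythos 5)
Your proposal is correct and follows essentially the same route as the paper's proof: the same decomposition into evaluation-point errors and linear-system errors via the intermediate exact solution $q_k^\star$ (the paper's $\hat z_k$), the same boundedness arguments from compactness of $\DD$ and continuity of $X(\lambda)$, and the same perturbation bound for the linear system (the paper cites Higham's bound where you use the resolvent identity, which is equivalent). If anything, you are slightly more explicit than the paper about why $\nabla_1^2 h(x_k,\lambda_k)$ remains invertible with uniformly bounded inverse for large $k$ --- a point the paper handles only implicitly through the condition $\varepsilon_k \|A_k^{-1}\| L_g \leq \rho < 1$.
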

\begin{proof}
Before starting the proof, we introduce the following notation for convenience. We denote by $A_k$ and $\hat{A}_k$ the Hessian of the inner optimization function evaluated at the model parameters and at the $k$-th iteration approximation, respectively. That is,
$$
A_k = \nabla^2_1 h(\lambda_k, X(\lambda_k)) \quad\text{ and }\quad \hat{A}_k = \nabla^2_1 h(\lambda_k, x_k)\quad.
$$
In a similar way we define $b_k, \hat{b_k}$ and $D_k, \hat{D}_k$ to be the gradient of the outer loss and cross derivatives of the inner loss, respectively:
\begin{equation*}
\begin{split}
b_k = \nabla_1 g(\lambda_k, X(\lambda_k)) \quad\text{ and }\quad \hat{b}_k = \nabla_1 g(\lambda_k, x_k)\\
D_k = \nabla^2_{1, 2} h(\lambda_k, X(\lambda_k)) \quad \text{ and } \quad \hat{D}_k = \nabla^2_{1, 2} h(\lambda_k, x_k)
\end{split}
\end{equation*}

Note that $\hat{A_k}, \hat{b}_k$ and $\hat{D}_k$ are the quantities involved in the \HOAG\ algorithm. On the other hand, the quantities ${A_k}, {b}_k$ and ${D}_k$ are an ``ideal'' version of the former, computed when the tolerance $\varepsilon_k$ is zero. It is not surprising though that the difference between ${A_k}, {b}_k, {D}_k$ and its hat counterpart will play a fundamental role in the proof.

The proof is structured in two parts. In part (i) we will prove that several sequences of interest are bounded, while in part (ii) we will use this to prove the main result.

\hfill

\underline{Part (i)}. We first note that that both $\norm{\lambda}$ and $\norm{X(\lambda)}$ are bounded and denote such bounds by $\xi$ and $\eta$, respectively. $\norm{\lambda}$ is bounded as a direct consequence of assumption (A3). On the other hand, $X(\lambda)$ is continuously differentiable as a result of the implicit function theorem. Since its domain is a bounded set, $\|X(\lambda)\|$ is also bounded. We prove that the following sequences are bounded:
% sequences $\{\norm{x_k}\}_{k=1}^{\infty}$, $\{\norm{A_k}\}_{k=1}^{\infty}$, $\{\|{A_k^{-1}}\|\}_{k=1}^{\infty}, \{\|{D_k^{-1}}\|\}_{k=1}^{\infty}$ and $\{\|{b_k}\|\}_{k=1}^{\infty}$ are bounded. 
\begin{itemize}
\item $\{\norm{x_k}\}_{k=1}^{\infty}$. By the termination condition of the inner optimization problem we have that ${\norm{X(\lambda_k) - x_k} \leq \varepsilon_k}$. Using the reverse triangular inequality we further have
\begin{equation*}
\begin{split}
 \varepsilon_k \geq \norm{X(\lambda_k) - x_k} \geq \abs{\norm{X(\lambda_k)} - \norm{x_k}} \\ \implies \norm{x_k} \leq \norm{X(\lambda_k)} + \varepsilon_k \leq \zeta + \varepsilon_k \quad
\end{split}
\end{equation*}
  hence $\norm{x_k}$ is a bounded sequence since $\{\varepsilon\}_{k=1}^\infty$ defines a summable (hence bounded) sequence.
\item $\{\norm{A_k}\}_{k=1}^{\infty}$, $ \{\|{D_k}\|\}_{k=1}^{\infty}$ and $\{\|{b_k}\|\}_{k=1}^{\infty}$. Assumption (A1) implies that there exists constants $L_E, L_g$ such that
$$
\begin{aligned}
\norm{A_k - A_0} &\leq L_E \norm{[\lambda_k, X(\lambda_k)] - [\lambda_0, X(\lambda_0)]} \\
&\leq L_E \sqrt{\xi^2 + \eta^2}\\
\norm{D_k-D_0} &\leq L_E \norm{[\lambda_k, X(\lambda_k)] - [\lambda_0, X(\lambda_0)]} \\ &\leq L_E \sqrt{\xi^2 + \eta^2}\\
\norm{b_k - b_0} &\leq L_g \norm{[\lambda_k, X(\lambda_k)] - [\lambda_0, X(\lambda_0)]} \\ &\leq L_g \sqrt{\xi^2 + \eta^2}\\,
\end{aligned}
$$
and so $\norm{A_k}, \norm{D_k}$ and $\norm{b_k}$ are all bounded sequences. 
\item $\{\|{A_k^{-1}}\|\}_{k=1}^{\infty}$. By assumption (A2), $A_k^{-1}$ exists and $\|A^{-1}_k\| < \infty$ for all $k$. Since $\DD$ is a compact set, the limit of the sequence verifies  $\lim_{k\to \infty}\|A^{-1}_k\| < \infty$, and hence is bounded.

\end{itemize}

\underline{Part (ii)}. By the Lipschitz assumption on $g$ and $h$, there exists finite numbers $L_g$ and $L_E$ such that we have the following sequence of inequalities:
$$
\begin{aligned}
{\|A_k - \hat{A}_k\|} &\leq L_E {\|X(\lambda_k) - x_k\|} \leq  L_E \varepsilon_k \\
{\|b_k - \hat{b}_k\|} &\leq L_g {\|X(\lambda_k) - x_k\|} \leq  L_g \varepsilon_k
\end{aligned}
$$

Let $z_k$ be the solution to the linear system of equations $A_k z_k = b_k$ and $\hat{z}$ be the solution to $\hat{A}_k \hat{z_k} = \hat{b}_k$. Since by assumption $A_k$ is invertible, $z_k = A_k^{-1} b_k$ and by the Cauchy-Schwarz inequality, we have that
$$
\norm{z_k} \leq \|{A_k^{-1}}\| \norm{b_k} \quad,
$$
and hence the sequence $\norm{z_k}$ is also bounded.

By the summability condition of $\varepsilon_k$ and the boundedness of $\|A^{-1}_k\|$ proved in part (i) of this proof, for all sufficiently large $k$ it is verified that $\varepsilon_k \|A_k^{-1}\| L_g \leq \rho < 1$ and by classical results related to the sensitivity of linear systems (see e.g. \citep[\S 7.1]{higham2002accuracy}) we have the following sequence of inequalities:
\begin{equation}\label{eq:bound_z}
\begin{aligned}
{\norm{z_k - \hat{z}_k}} &\leq \frac{\varepsilon_k}{ 1 - \varepsilon_k\|{A_k^{-1}}\| L_g} \left({\|{A_k^{-1}}\|L_f} + {\norm{z_k}} \|A_k^{-1}\|L_g\right) \\
&\leq \frac{\varepsilon_k}{ 1 - \rho } \left({\|{A_k^{-1}}\|L_f} + {\norm{z_k}} \|A_k^{-1}\|L_g\right)  \\
&= \mathcal{O}(\varepsilon_k) \qquad \qquad \text{ (bound on all terms involved)}
\end{aligned}
\end{equation}
where the first inequality is derived from~\citep[\S 7.1]{higham2002accuracy} and the second one comes from the definition of $\rho$.

This last equation provides a bound on the difference between solving the linear system at $X(\lambda)$ and solving the linear system at $x_k$, assuming that the linear system is solved to full precision. However, in practice we do not attempt to solve the linear system to full precision but rather compute an approximation $q_k$ such that
$
\|{\hat{A}_k q_k - \hat{b}_k}\| \leq \varepsilon_k$ { (step $(ii)$ of \HOAG)}
and we are interest in bounding the distance between $q_k$ and its noise-less version $z_k$. We have the following sequence of inequalities:
$$
\begin{aligned}
&\norm{q_k - z_k} = \|{q_k - \hat{z}_k + \hat{z}_k - z_k}\|  \\
&\leq \|q_k - \hat{z}_k\| + \|\hat{z}_k - z_k\| \qquad \text{(triangular inequality)} \\
&\leq \|\hat{A}^{-1}_k \hat{A}_k (q_k - \hat{z}_k)\| + \|\hat{z}_k - z_k\| \\
&= \|\hat{A}^{-1}_k (\hat{A}_k q_k - \hat{b}_k)\| + \|\hat{z}_k - z_k\| \quad \text{(definition of $\hat{z}_k$)} \\
&\leq \|\hat{A}^{-1}_k\| \|\hat{A}_k q_k - \hat{b}_k\| + \|\hat{z}_k - z_k\| \quad \text{(Cauchy-Schwarz)} \\
&\leq \|\hat{A}^{-1}_k\| \varepsilon_k  + \|\hat{z}_k - z_k\| \qquad \text{ (definition of $q_k$)} \\
&= \mathcal{O}(\varepsilon_k) \qquad \text{ (Eq.~\eqref{eq:bound_z})} \quad.
\end{aligned}
$$
Finally, using this we can write that the difference between $p_k$ and the true gradient. Let $c_k, \hat{c}_k$ be defined as
\begin{equation*}
\begin{split}
c_k = \nabla_2 {g}(\lambda_k, X(\lambda_k)) \quad \text{ and } \quad \hat{c}_k = \nabla_2 {g}(\lambda_k, x_k)  \\
\end{split}
\end{equation*}
Then it is verified that
$$
\begin{aligned}
&\norm{\nabla f(\lambda_k) - p_k} = \|{c_k - D^T z_k - \hat{c}_k - \hat{D}_k^T q_k}\| \\ 
&\leq \|c_k - \hat{c}_k\| + \|D_k^T z_k - \hat{D}_k^T q_k\|  \\& \qquad\qquad \text{(triangular inequality)} \\
&\leq \|c_k - \hat{c}_k\| + \|D_k^T z_k - \hat{D}_k^T z_k + \hat{D}_k^T z_k - \hat{D}_k^T q_k\| \\& \qquad\qquad \text{(Add and remove $\hat{D}_k^T z_k$)} \\
&\leq \|c_k - \hat{c}_k\| + \|D_k^T z_k - \hat{D}_k^T z_k\| + \|\hat{D}_k^T z_k - \hat{D}_k^T q_k\| \\& \qquad\qquad \text{(triangular inequality)} \\
&\leq \|c_k - \hat{c}_k\| + \|D_k - \hat{D}_k\|\|z_k\| + \|\hat{D}_k\|\|z_k - q_k\| \\& \qquad\qquad \text{(Cauchy-Schwartz)} \\
&\leq L_g \varepsilon_k + L_E \varepsilon_k \|z_k\| + \|\hat{D}_k\|\|z_k - q_k\|  \\ & \qquad\qquad \text{(Assumption (A1))} \\
&\leq L_g \varepsilon_k + L_E \varepsilon_k \|z_k\| + \|\hat{D}_k\|\mathcal{O}(\varepsilon_k ) \\&\qquad \qquad \text{(previous inequality)} \\
&= \mathcal{O}(\varepsilon_k)\qquad \text{(bound on $\|z_k\|$ and $\|\hat{D}_k\|$)}
\end{aligned}
$$
which completes the proof.
\end{proof}

% \vbox{
\begin{customthm}{2}[Global convergence] If the tolerance sequence is summable, that is, if $\{\varepsilon\}_{i=1}^n$ is positive and verifies
$$
\sum_{i=1}^\infty \varepsilon_i < \infty \quad,
$$
then the limit of the \HOAG\ iterates verifies the stationary point condition:
$$
\lim_{k \to \infty} \|\lambda_k - P_{\DD}\big(\lambda_k - \frac{1}{L} \nabla f(x_k)\big)\| = 0\,.
$$
Furthermore, if the iterates belong to the domain for sufficiently large $k$, then
$$
\lim_{k \to \infty} \|\nabla f(x_k)\| = 0\,.
$$
\end{customthm}
%}
\begin{proof}
By assumption, $f$ is $L$-smooth. This implies the following inequality for any pair of values $\alpha, \beta \in \DD$:
\begin{equation}\label{eq:lipschitz}
f(\beta) \leq f(\alpha) + \nabla f(\alpha)^T (\beta - \alpha) + \frac{L}{2} \|\beta - \alpha\|^2 \quad.
\end{equation}
This is a classical result on quadratic upper bounds for \mbox{$L$-smooth} functions (see e.g.~\citep[Lemma 1.2.3]{nesterov2004introductory}).
We will also make use of the following inequality concerning the projection $P_\DD$, which stems from the fact that projections onto convex sets are \emph{firmly nonexpansive} operators (see e.g.~\citet[\S 2.2]{parikh2013proximal}). Let $\eta, \nu \in \RR^s$, then the following is verified:
\begin{equation}\label{eq:nonexpansiveness}
\|P_\DD(\eta) - P_\DD(\nu)\|^2 \leq (\eta - \nu)^T (P_\DD(\eta) - P_\DD(\nu))
\end{equation}
In particular, for $\eta = \lambda_k, \nu = \lambda_k - \frac{1}{L} p_k$, this reduces to
\begin{equation}\label{eq:nonexpansive}
\|\lambda_k - \lambda_{k+1}\|^2 \leq \frac{1}{L}p_k^T (\lambda_k - \lambda_{k+1})
\end{equation}
Setting now $\alpha = \lambda_k, \beta = \lambda_{k+1} = P_\DD(\lambda_k - \frac{1}{L}p_k)$ in Eq.~\eqref{eq:lipschitz}, we have the following sequence of inequalities
\begin{equation}\label{eq:descent_ineq}
\begin{aligned}
f(\lambda_{k+1}) &\leq f(\lambda_k) - \nabla f(\lambda_k)^T (\lambda_k - \lambda_{k+1})  \\
&\qquad \qquad +\frac{L}{2}\norm{\lambda_{k+1} - \lambda_k}^2 \\
&= f(\lambda_k) - (\nabla f(\lambda_k) - p_k + p_k)^T (\Delta \lambda_k - \lambda_{k+1}) \\
&\qquad\qquad + \frac{L}{2}\norm{\lambda_{k+1} - \lambda_k}^2  \\
&= f(\lambda_k) - (\nabla f(\lambda_k) - p_k)^T (\lambda_k - \lambda_{k+1}) \\
&\quad -  p_k^T (\lambda_k - \lambda_{k+1}) + \frac{L}{2}\norm{\lambda_{k+1} - \lambda_k}^2 \\
&\leq f(\lambda_k) - (\nabla f(\lambda_k) - p_k)^T (\lambda_k - \lambda_{k+1})\\&\qquad - \frac{L}{2}\norm{\lambda_{k+1}  - \lambda_k}^2 \qquad \text{ (by Eq.~\eqref{eq:nonexpansive})}\\
&\leq f(\lambda_k) + \|\nabla f(\lambda_k) - p_k\| \|\lambda_k - \lambda_{k+1}\|  \\& \qquad - \frac{L}{2}\norm{\lambda_{k+1} - \lambda_k}^2 \qquad\text{ (Cauchy-Schwartz)}\\
\end{aligned}
\end{equation}

By Lemma~\ref{thm:bound_gradient}, $\|\nabla f(\lambda_k) - p_k\| = \mathcal{O}(\varepsilon_k)$. Since $\DD$ is bounded, we have that there exists $M> 0$ and such that for sufficiently large $k$
\begin{equation}
\|\nabla f(\lambda_k) - p_k\| \|\lambda_k - \lambda_{k+1}\| < M \varepsilon_k \quad.
\end{equation}
which applied to the previous inequality results in
$$
f(\lambda_{k+1}) \leq f(\lambda_k) + M \varepsilon_k - \frac{L}{2}\norm{\lambda_{k+1} - \lambda_k}^2 \quad,
$$
or equivalently
$$
\norm{\lambda_{k+1} - \lambda_k}^2 \leq \frac{2}{L} \left(f(\lambda_k) - f(\lambda_{k+1})  + M \varepsilon_k \right) \quad.
$$

Let $C$ be an lower bound on the function $f$. This bound exist and is finite because $f$ has continuous derivatives and is defined on a compact set. Summing the last expression from $k=m$ to $k=\infty$ we obtain
$$
\sum_{k = m}^\infty \norm{\lambda_{k+1} - \lambda_k}^2 \leq \frac{2}{L} \left(f(\lambda_m) - C  + M \sum_{k=m}^\infty \varepsilon_k \right) \quad.
$$
Since $\{\varepsilon_k\}_{k=1}^\infty$ is a summable sequence we conclude that the right-hand side of this expression is finite. Hence, for the sum on the left-hand side to be finite we must have $\lim_{k \to \infty} \norm{\lambda_{k+1} - \lambda_k}^2 = 0$. Replacing $\lambda_{k+1}$ by its definition, we have
\begin{equation}\label{eq:limitpk}
\lim_{k \to \infty}\|\lambda_k - P_\DD(\lambda_k - \tfrac{1}{L}p_k) \| = 0
\end{equation}
This is almost the result that we want, except with $p_k$ instead of the gradient. To show that this result holds when the $p_k$ is replaced by the true gradient, we will use the non-expansiveess of the prox and existing bounds between the gradient and $p_k$. More precisely, for the gradient mapping, we have
\begin{align}
&\lim_{k \to \infty}\|\lambda_k - P_\DD(\lambda_k - \tfrac{1}{L}\nabla f(\lambda_k)) \| \\
&\stackrel{\text{(triangular)}}{\leq} 
\lim_{k \to \infty}\|P_\DD(\lambda_k - \tfrac{1}{L}p_k) - P_\DD(\lambda_k - \tfrac{1}{L}\nabla f(\lambda_k))\| \nonumber\\
&\qquad+ \underbrace{\lim_{k \to \infty}\|P_\DD(\lambda_k - \tfrac{1}{L} p_k) - \lambda_k\|}_{0 \text{ by \eqref{eq:limitpk}}} \\
&\stackrel{\text{\eqref{eq:nonexpansiveness}}}{\leq} \lim_{k \to \infty}\|\tfrac{1}{L}\nabla f(\lambda_k) - \tfrac{1}{L}p_k\|
\stackrel{\text{(Lemma \ref{thm:bound_gradient})}}{=} 0\,.
\end{align}
We've shown that the limit of the gradient mapping is upper bounded by zero. Since the norm is non-negative, it must be zero.

When the iterates belong to the domain, the projection is the identity and so the gradient mapping becomes the (scaled) gradient norm, from where we have $\lim_{k \to \infty} \|\nabla f(x_k)\| = 0\,.$
\end{proof}

\section{Experiments}\label{scs:appendix_experiments}

\begin{figure}
\center \includegraphics[width=0.7 \linewidth]{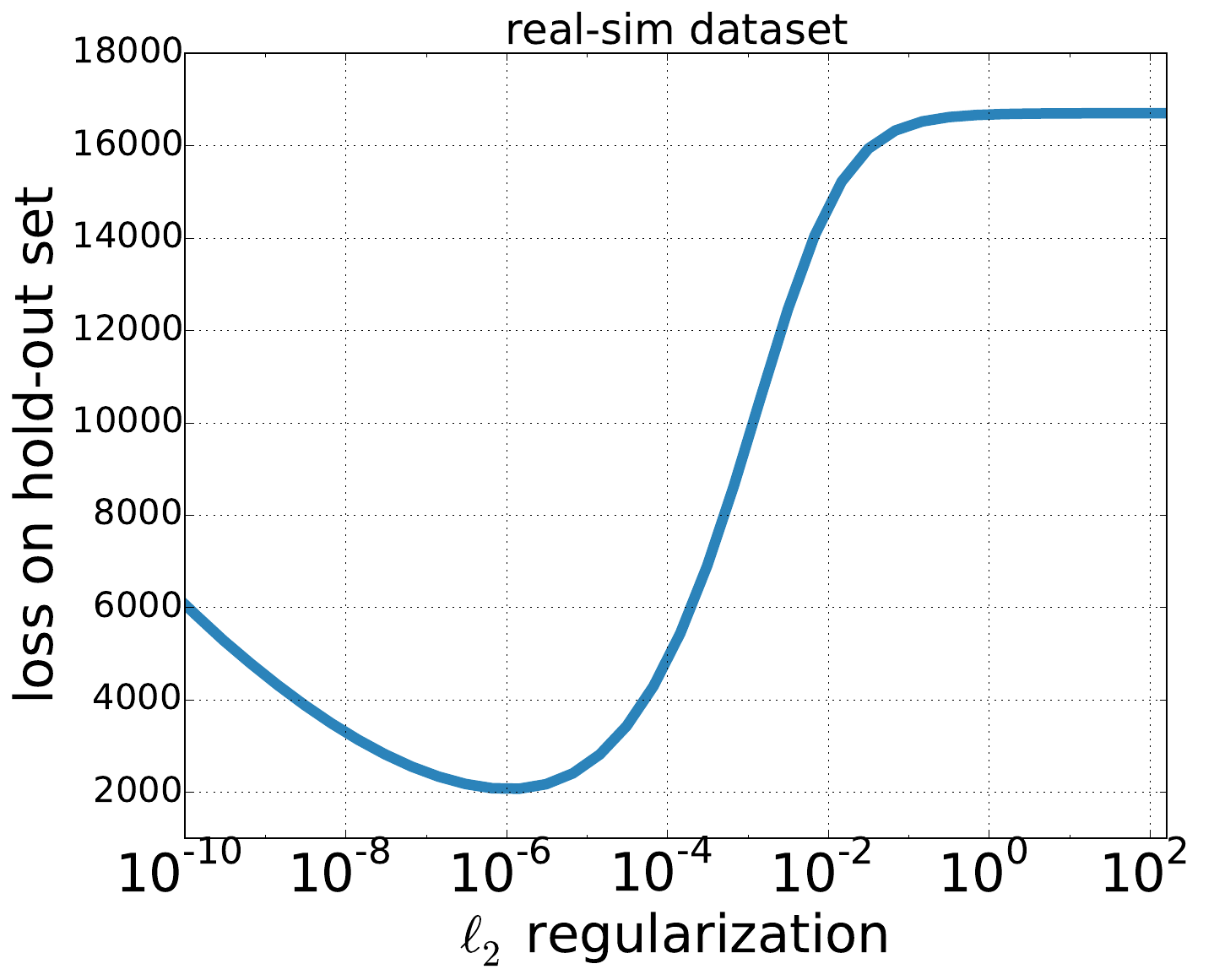}
\center \includegraphics[width=0.7 \linewidth]{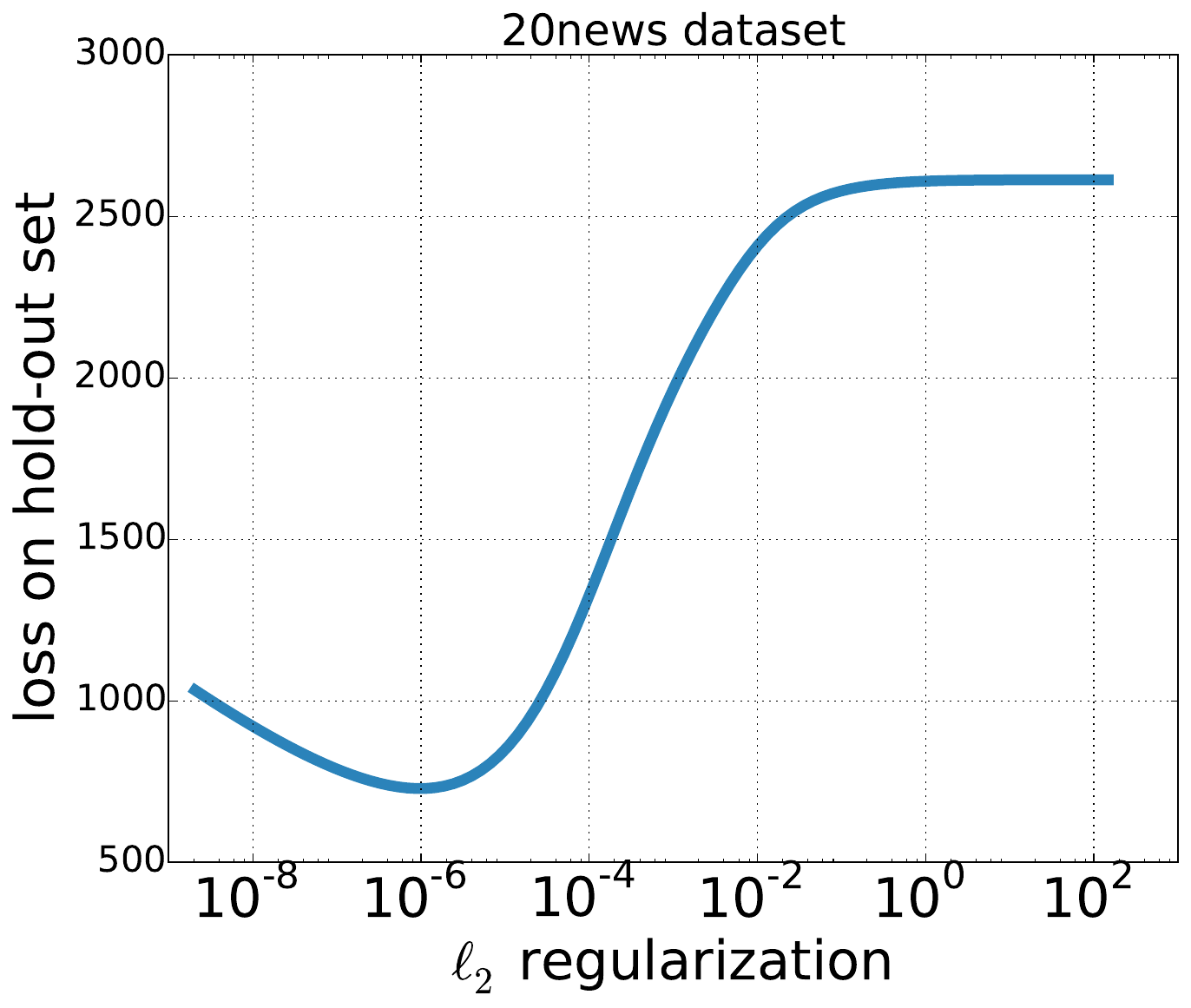}
\includegraphics[width=\linewidth]{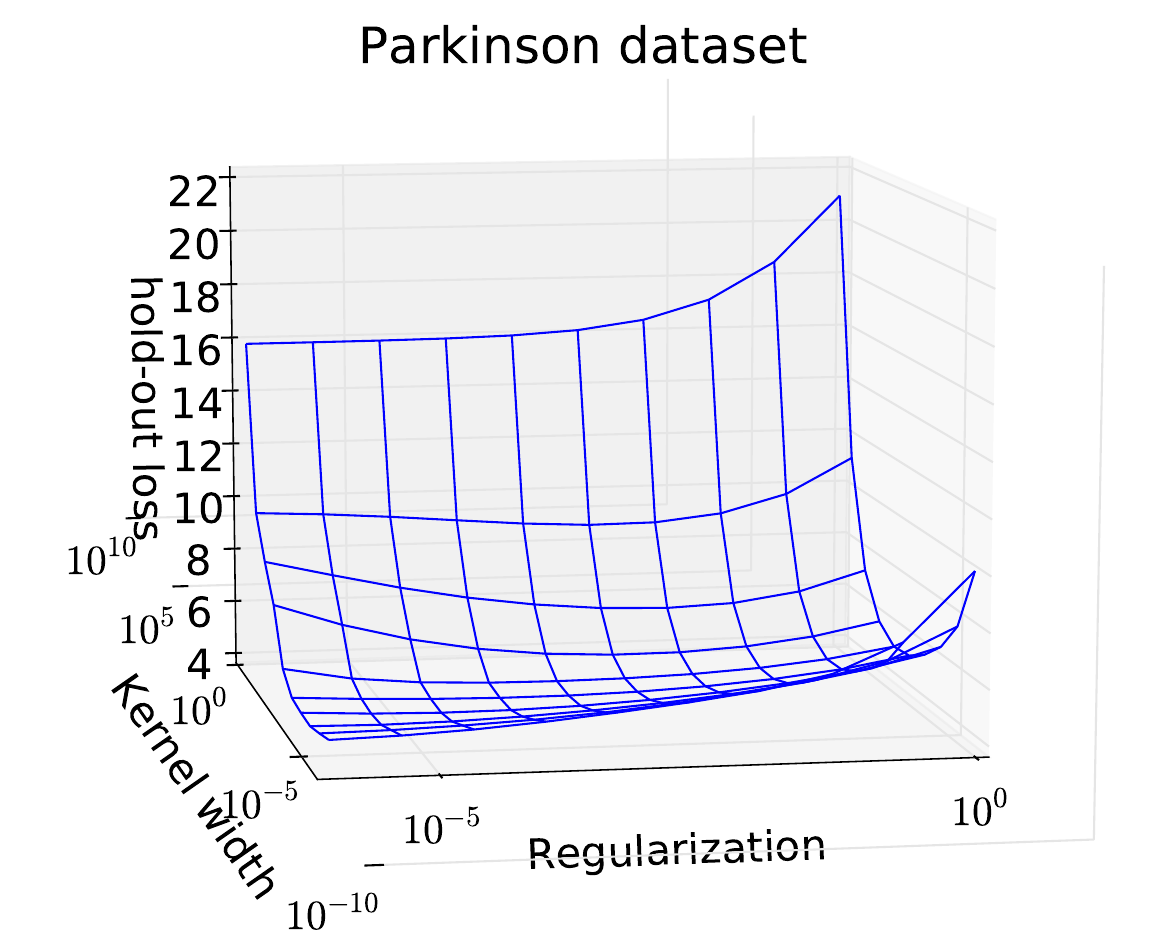}
\caption{{\bf Cost functions of the different hyperparameter optimization methods}. These are the cost functions (denoted $f$ through the paper) as a function of the hyperparameters ($\lambda$) fo the three problems considered. In the two first images, the hyperparameter is the $\ell_2$ regularization parameter and the objective function has a unique minima. The third features the cost function as a function of the kernel width and $\ell_2$ regularization.}
\end{figure}

\subsection*{Stopping criterion}

In order to bound the tolerance we used the following lemma
\begin{lemma} Let $s: \RR^p \to \RR$ be $\mu$-strongly convex and \mbox{$L$-smooth}. We denote by $x^* \in \RR^p$ the minimum of this function. Then the following is verified for all $x$ in the domain
$$
\|x^* - x\| \leq \mu^{-1} \|\nabla s(x)\|
$$
\end{lemma}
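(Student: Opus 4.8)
The plan is to use only the $\mu$-strong convexity of $s$; the $L$-smoothness hypothesis is not needed for this particular inequality (it is presumably invoked elsewhere in the stopping-criterion derivation). First I would recall that $\mu$-strong convexity of a differentiable $s$ is equivalent to strong monotonicity of its gradient, i.e.
\begin{equation*}
\langle \nabla s(x) - \nabla s(y),\, x - y \rangle \geq \mu \|x - y\|^2 \qquad \text{for all } x, y \in \RR^p \,.
\end{equation*}
This follows from adding the two strong-convexity inequalities $s(y) \geq s(x) + \langle \nabla s(x), y-x\rangle + \tfrac{\mu}{2}\|y-x\|^2$ and $s(x) \geq s(y) + \langle \nabla s(y), x-y\rangle + \tfrac{\mu}{2}\|x-y\|^2$.

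Next I would specialize this to $y = x^*$. Since $x^*$ minimizes $s$ over all of $\RR^p$ and $s$ is differentiable, first-order optimality gives $\nabla s(x^*) = 0$. Substituting yields
\begin{equation*}
\langle \nabla s(x),\, x - x^* \rangle \geq \mu \|x - x^*\|^2 \,.
\end{equation*}
Applying the Cauchy--Schwarz inequality to the left-hand side gives $\|\nabla s(x)\|\, \|x - x^*\| \geq \mu \|x - x^*\|^2$.

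Finally I would conclude by case analysis on whether $x = x^*$. If $x = x^*$ the claimed inequality $\|x^* - x\| \leq \mu^{-1}\|\nabla s(x)\|$ holds trivially (left side is zero). If $x \neq x^*$, then $\|x - x^*\| > 0$ and dividing both sides of $\|\nabla s(x)\|\,\|x - x^*\| \geq \mu\|x - x^*\|^2$ by $\|x - x^*\|$ gives $\|\nabla s(x)\| \geq \mu \|x - x^*\|$, which rearranges to the statement. There is essentially no obstacle here; the only point requiring a line of justification is the passage from strong convexity to strong monotonicity of the gradient, and even that is a standard textbook fact one could simply cite (e.g., as done elsewhere in the paper with Nesterov's monograph).
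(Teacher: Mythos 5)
Your proof is correct and follows essentially the same route as the paper's: strong monotonicity of the gradient from $\mu$-strong convexity, Cauchy--Schwarz, and specialization at $y = x^*$ where $\nabla s(x^*) = 0$ (the paper writes this with a small typo as $s(x^*)=0$, which you implicitly correct). Your added case analysis for $x = x^*$ and the remark that $L$-smoothness is not actually used are both minor refinements of the same argument.
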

\begin{proof} Using basic properties of $\mu$-strongly convex functions  (see e.g.~\citep{nesterov2004introductory}), we have the following sequence of inequalities for all $x, y$ in the domain:
$$
\begin{aligned}
\mu \norm{x - y}^2 &\leq \langle \nabla s(x) - \nabla s(y), x - y \rangle \\
&\qquad \text{ (by strong convexity)} \\
&\leq \norm{\nabla s(x) - \nabla s(y)} \norm{x - y} \\
&\qquad \text{ (by Cauchy-Schwarz)} \\
\end{aligned}
$$
from where $\mu \norm{x - y} \leq \norm{\nabla s(x) - \nabla s(y)}$. Specializing at $y=x^*$ and using $s(x^*) = 0$ (the inner optimization is unconstrained) yields the desired result.
\end{proof}

\subsection*{Adaptive step size}

We will now derive the inequality used by the adaptive step size procedure presented in the Experiments section. The derivation of this procedure uses the \mbox{$L$-smooth} assumption and Theorem~\ref{thm:bound_gradient}. The $L$-smooth assumption implies that for all $\alpha, \beta$ in the domain, the following inequality is verified
\begin{equation*}
f(\beta) \leq f(\alpha) + \nabla f(\alpha)^T (\beta - \alpha) + \frac{L}{2} \|\beta - \alpha\|^2 \quad.
\end{equation*}
Setting $\alpha = \lambda_{k-1}, \beta = \lambda_k$ in the above inequality, and using the bound $\|\nabla f(\lambda_{k}) - p_k\| < M \varepsilon_k$ given by Theorem~\ref{thm:bound_gradient}, we have that
\begin{equation}\label{eq:line_search_0}
\begin{aligned}
f(\lambda_k) \leq &f(\lambda_{k-1}) + \varepsilon_{k-1} M \Delta_k - L\Delta_k^2 \quad.
\end{aligned}
\end{equation}
for some constant $M$ and where $\Delta_k$ is defined as $\norm{\lambda_k - \lambda_{k-1}}$ (A more rigorous derivation of this inequality can be found in the supplementary material, Eq. (4)). Now, we do not have access to $f(\lambda_k)$ as this depends on the exact model parameters. However, by by the definition of $x_k$ we have that $\norm{x_k - X(\lambda)} \leq \varepsilon_k$. Furthermore, $g$ is Lipschitz continuous since this is a weaker condition than assumption (A1), hence there exists a constant $C$ such that $\|g(\lambda_k, x_k) - g(\lambda_k, X(\lambda_k))\| \leq C \varepsilon_k$. Since by definition $f(\lambda) = g(\lambda, X(\lambda))$, we can derive the inequalities $g(\lambda_k, x_k) - C \varepsilon_k \leq f(\lambda_k)$ and $f(\lambda_{k-1}) \leq g(\lambda_{k-1}, x_k) + C \varepsilon_{k-1}$. Replacing this into Eq.~\eqref{eq:line_search_0} yields the following inequality:
\begin{equation}
\begin{aligned}
g(\lambda_k, x_k) \leq ~&g(\lambda_{k-1}, x_{k-1}) + C \varepsilon_k + \\
&\varepsilon_{k-1} (C + M) \Delta_k - L\Delta_k^2 \quad.
\end{aligned}
\end{equation}

\subsection*{Datasets}

The first dataset that we consider is the ``20news dataset\footnote{\url{http://qwone.com/~jason/20Newsgroups/}}'' which contains 18000 newsgroups posts on 20 topics, with the task of predicting the appropriate group of a post. The features we used are the tf-idf vectors obtained from the original dataset, and the groups were randomly split into two categories to obtain a binary classification problem.

The second dataset that we use is also a text categorization task denoted ``real-sim''. This dataset contains 73218 UseNet articles from four discussion groups, for simulated auto racing, simulated aviation, real autos, real aviation. The binary classification task is to predict whether it belongs to the real-\{autos, aviation\} group or the simulated-\{aviation, auto racing\} group. This dataset was obtained from the libsvmtools project\footnote{\url{http://www.csie.ntu.edu.tw/~cjlin/libsvmtools/datasets/binary.html}}.

The third dataset, which we denote the ``Parkinson dataset'', is used by the kernel Ridge regression problem. This dataset is composed of a range of biomedical voice measurements from 42 
people with early-stage Parkinson's disease\citep{tsanas2010accurate}. This dataset contains 5875 samples with 26 features and is publicly available from the UCI machine learning repository\footnote{\url{http://archive.ics.uci.edu/ml/}}.

The forth dataset we used is the MNIST\footnote{\url{http://yann.lecun.com/exdb/mnist/}} dataset, with 60000 samples and images subsampled to $12 \times 12$ pixels.  We replicate the model used in~\citep[\S 3.2]{MacDuvAda2015hyper}, where the authors consider a multinomial logistic regression model with one regularization parameter per feature, totaling $12 \times 12$ (number of features) $ \times 10$ (number of classes) $ = 1440$ hyperparameters. For completeness we state here the loss functions involved in this model. Denoting by $\psi$ the multinomial logistic loss, the inner optimization loss $h$ and its gradients read: 
$$
\begin{aligned}
h(\lambda, x) &= \sum_{i=1}^n \psi(b_i, a_i^T x) + \frac{1}{2} \sum_{j=1}^p e^{\lambda_j} x_j^2 \\
\nabla_{1} h &= \sum_{i=1}^n b_i a_i^T \psi'(b_i a_i^T x) + e^{\lambda} * x\\
\nabla_{2} h^2_{1, 2} &= \text{ diagonal matrix with elements } (e^{\lambda_i} x_i)_{i, i} \\
\psi(z) &= \text{multinomial logistic loss} \quad,
\end{aligned}
$$
where $*$ denotes component-wise multiplication and $e^\lambda = (e^{\lambda_1}, e^{\lambda_2}, \ldots, e^{\lambda_s})$.

\subsection*{Further experimental validation}
\begin{figure}
\center \includegraphics[width=.7\linewidth]{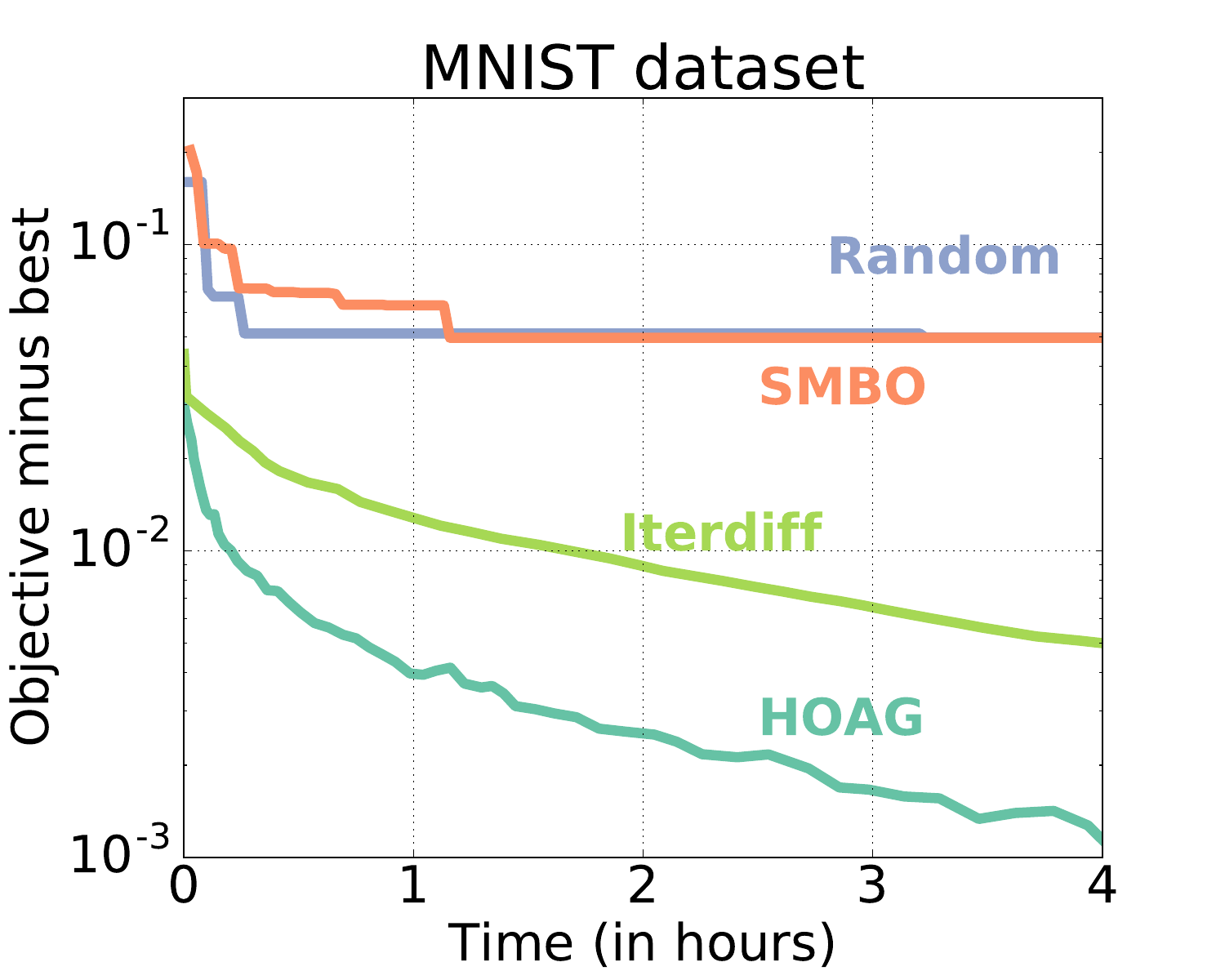}
\center \includegraphics[width=.7\linewidth]{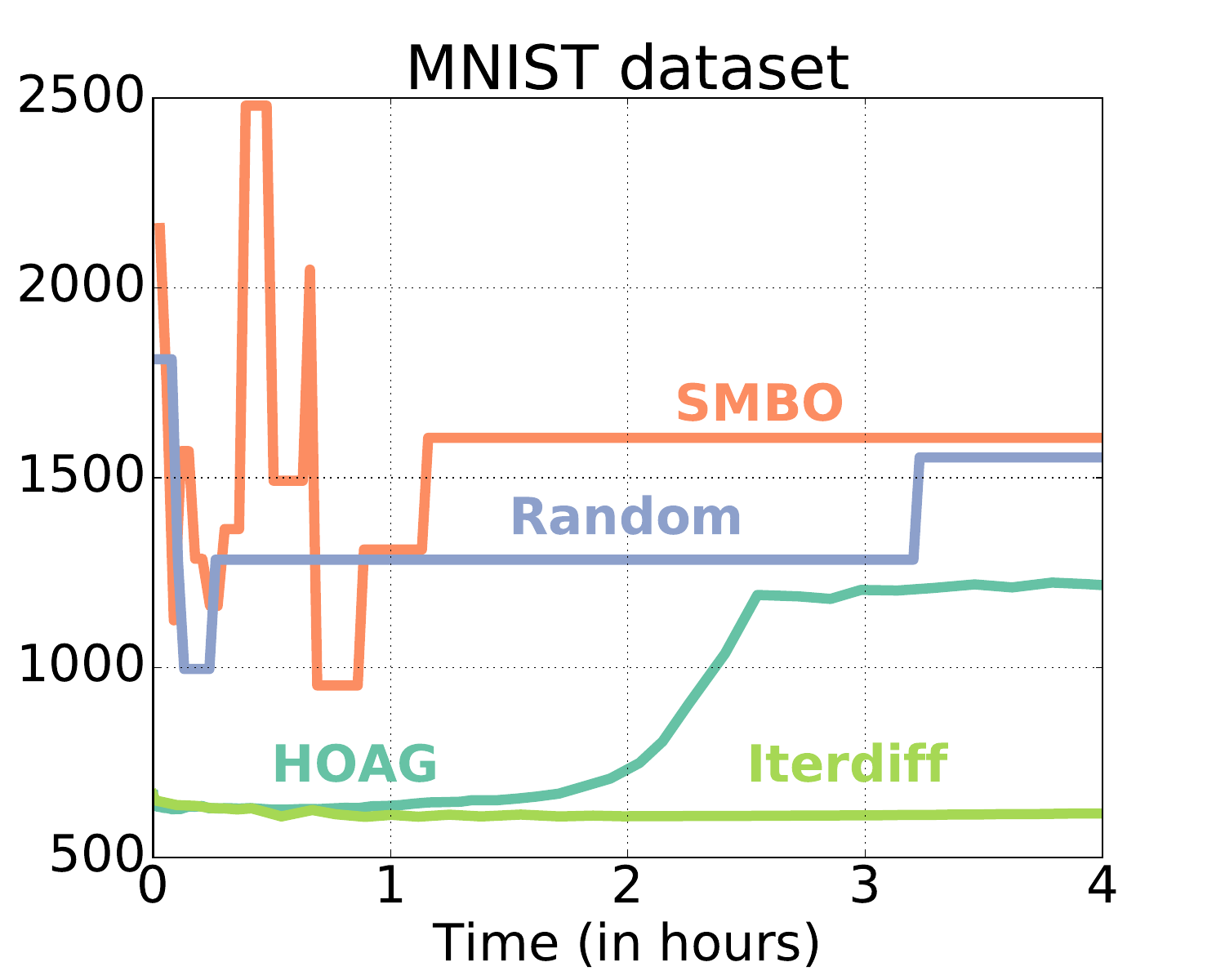}
\caption{{\bf Hyperparameter optimization methods}. Top row: suboptimality of the different methods in terms of the test loss. Bottom row: loss measured on a validation set for the different methods.}\label{fig:bench_ho_appendix}
\end{figure}
We compared the different methods considered on the MNIST dataset (except Grid search due to the very high dimensionality of the hyperparameter space) in terms of test and validation loss versus time. Results can be seen in Fig~\ref{fig:bench_ho_appendix}. In the top column we show the test loss as a function of time. As expected, only gradient-based hyperparameter optimization methods provide satisfactory results on this very high-dimensional problem. We can also see how \HOAG\ yields much better results than any of the other methods.

In the bottom row we display instead the validation loss as a function of time. In this plot we can see how the validation loss increases as the test loss decreases, providing a clear sign of overfit in the design of the model. In order to reduce this problem, we could add constraints or reduce the number of hyperparameters. However, this would change the experimental design while the goal of this experiment was to compare the methods in a similar setting to that of~\citep{MacDuvAda2015hyper}.

\end{document}